\theoremstyle{plain}
\newtheorem{theorem}{Theorem}[section]
\newtheorem{proposition}[theorem]{Proposition}
\newtheorem{lemma}[theorem]{Lemma}
\newtheorem{corollary}[theorem]{Corollary}
\theoremstyle{definition}
\newtheorem{assumption}[theorem]{Assumption}
\theoremstyle{remark}
\definecolor{LightCyan}{rgb}{0.6,1,1}
\DeclareMathOperator{\poly}{poly}
\newcommand{\mtx}{\bm} 
\newcommand{\vct}{\bm} 
\newcommand{\E}{\mathbbm{E}}
\newcommand{\cat}[2]{
\begin{bmatrix}#1 \\ #2
\end{bmatrix}}
\newcommand{\blue}[1]{{{\textcolor{black}{#1}}}}
\newcommand{\ours}{MixPro}
\newcommand{\embmx}{\mathcal{E}_{\text{mixed}}}
\newcommand{\mixed}{\text{mixed}}
\newcommand{\prot}{\textsc{Pro}\textsuperscript{2}}
\newcommand{\unif}{\overset{\text{unif.}}{\sim}}
\newcommand{\spu}{\text{spu}}
\icmltitlerunning{Few-shot Adaptation to Distribution Shifts By Mixing Source and Target Embeddings}
\begin{document}

\twocolumn[
\icmltitle{Few-shot Adaptation to Distribution Shifts By Mixing\\ Source and Target Embeddings}



\icmlsetsymbol{equal}{*}

\begin{icmlauthorlist}
\icmlauthor{Yihao Xue}{yyy}
\icmlauthor{Ali Payani}{comp}
\icmlauthor{Yu Yang}{yyy}
\icmlauthor{Baharan Mirzasoleiman}{yyy}
\end{icmlauthorlist}

\icmlaffiliation{yyy}{Department of Computer Science, University of California, Los Angeles}
\icmlaffiliation{comp}{Cisco Systems Inc.}

\icmlcorrespondingauthor{Yihao Xue}{yihaoxue@g.ucla.edu}

\icmlkeywords{Machine Learning, ICML}

\vskip 0.3in
]




\begin{abstract}
Pretrained machine learning models  need to be adapted to distribution shifts when deployed in new target environments. 
When obtaining labeled data from the target distribution is expensive, few-shot adaptation with only a few examples from the target distribution becomes essential. In this work, we propose \ours, a lightweight and highly data-efficient approach for few-shot adaptation. \ours\ first generates a relatively large dataset by mixing (linearly combining) pre-trained embeddings of large source data with those of the few target examples. This process preserves important features of both source and target distributions, while mitigating the specific noise in the small target data. Then, it trains a linear classifier on the mixed embeddings to effectively adapts the model to the target distribution without overfitting the small target data. 
Theoretically, we demonstrate the advantages of \ours\ over previous methods. Our experiments, conducted across various model architectures on 8 datasets featuring different types of distribution shifts, reveal that \ours\ can outperform baselines by up to 7\%, with only 2-4 target examples.\looseness=-1



\end{abstract}

\section{Introduction}

Modern machine learning models often struggle to generalize well, when deployed in domains where the data distribution significantly differs from their source training data distribution \cite{quinonero2008dataset}. 
Thus, before deployment in a new domain, they need to be adapted to the target distribution. When abundant data from the target domain is available, one can simply fine-tune the model on the target data to improve its performance. 
Nevertheless, in many real-world scenarios only a limited number of examples from the target domain is available. 
For example, data for training autonomous vehicles 
in severe weather conditions, 
are not only rare in certain geographical areas but also pose safety risks for data collection. In medical diagnosis, collecting data for rare diseases 
is often challenging. In such scenarios, fine-tuning on the small target data fails, by overfitting the few available examples instead of learning their features in a generalizable manner. \textit{Few-shot adaptation} of a model to a new domain 
requires developing data-efficient methods that can effectively adapt the model with only \textit{a few} examples from the target domain. 

Recent studies revealed that neural networks learn versatile features from the training data in their {penultimate layer}
\cite{kirichenko2022last, izmailov2022feature, lee2022surgical, mehta2022you, rosenfeld2022domain},
and retraining only the {last layer} using the target data 
can effectively re-weight features to improve generalization on the target distribution. This approach is extremely lightweight and 
outperforms end-to-end fine-tuning of the whole model on the target data (Appendix C.1, \citealt{kirichenko2022last}), and therefore has recently become a popular technique 
to deal with different kinds of distribution shift \cite{kirichenko2022last,mehta2022you,rosenfeld2022domain,izmailov2022feature,chen2023project,qiu2023simple}.
Nevertheless, {last-layer} retraining 
is not the most data-efficient technique and can perform poorly for few-shot adaptation
\cite{chen2023project}. 
To address this, \citet{chen2023project} proposed \prot\ that finds a 
diverse set of 
features from the 
source data and re-weights them 
by training a linear model on
the few available target data. But, as finding the set of compact features is done independent of the target data, this approach may miss capturing 
important relevant features for the target domain, and yields sub-optimal performance. 

In this work, we develop a highly data-efficient method, \ours, that takes advantage of the few available target examples 
in addition to the abundant source data, to re-weight the last layer of a pre-trained model, effectively adapting it to the target distribution. \ours\ first constructs a large new dataset by \textit{mixing} embedding of every source examples with that of a randomly chosen target example in the same class, via taking their weighted linear combination. 
Then, it 
trains a linear model on the new dataset of mixed embeddings. 
In the new dataset, every example is a combination of a target example with a distinct source example. Therefore, the model trained on the mixed embeddings learns the target information, without overfitting (particular noise in) the few target examples. Besides, it can take advantage of the relevant information of the source data.
In doing so, \ours\ effectively adapts the model to the target domain. 

We theoretically validate the effectiveness of our approach in two scenarios. 
Firstly, we demonstrate that when important features for the target distribution appear irrelevant in the source distribution, \prot\ \cite{chen2023project} fails to learn these features, resulting in poor generalization on the target. In contrast, \ours\ effectively learns the important features in a data-efficient manner. Secondly, we adopt a model for subpopulation shift \cite{sagawa2020investigation} and examine \ours's performance in high-dimensional asymptotics. We show that \ours\ enables effective 
learning from the target distribution while using the source data to prevent overfitting the small target. 
In doing so, \ours\ outperforms 
last-layer training on only target
\cite{kirichenko2022last} 
or source data.  
Moreover, our analysis demonstrates how the severity of the shift, the noise level in the target data, and the target sample size influence the optimal mixing weight. 

Empirically, we conduct extensive experiments on 8 datasets, including 3 subpopulation shift datasets -- Waterbirds \cite{sagawa2019distributionally}, UrbanCars \cite{li2023whac}, bFFHQ \cite{kim2021biaswap} -- and 5 domain generalization datasets -- Camelyon17\cite{koh2021wilds}, PACS \cite{li2017deeper}, VLCS \cite{fang2013unbiased}, Office-Home \cite{venkateswara2017deep} and Terra Incognita \cite{beery2018recognition}. We show that \ours\ outperforms existing baselines, achieving superior performance even with very few (2 to 16 per class) target data, across various datasets and model architectures. It can achieve a maximum improvement of 7\%, and on average, outperforms baselines by 4.3\%/3.9\% for 2-shot/4-shot adaptation across datasets. Finally, we show that \ours\ remains the best method when hyperparameters are selected using cross validation with only a few target data. \looseness=-1



\section{Related Work}

\textbf{Distribution shift.} Distribution shift, or domain shift, refers to a scenario where a model is trained on data from one distribution but is expected to generalize to test data from different distributions. 
Prior work primarily focuses on two settings: zero-shot generalization, which involves training a classifier on source data without seeing target data and expecting it to perform well on the target distribution \cite{tzeng2014deep,ganin2016domain,zhai2019large,yosinski2014transferable,sagawa2019distributionally,arjovsky2019invariant,creager2021environment,kornblith2019better,zhang2022contrastive,wortsman2022robust,sharif2014cnn,nam2020learning,oquab2014learning,liu2021just,kumar2022fine,pagliardini2022agree,lee2022diversify}; and test time adaptation, where the trained model is additionally updated upon seeing unlabeled test data from the target distribution \cite{sun2020test,varsavsky2020test,iwasawa2021test,wang2020tent,zhang2021adaptive,gandelsman2022test}. These differ from our setting since we consider a few-shot case where labeled target data are available but very few. Additionally, these works 
do not take advantage
of the recent discovery that naively training a neural network can already learn generalizable features in the representation layer, thus do not address the newly identified bottleneck regarding last layer retraining, which we discuss in the next paragraph.

\textbf{Last layer retraining for distribution shifts.} A common intuition, shared in aforementioned works, is that the failure of deep networks to generalize to out-of-distribution data stems from their inability to learn generalizable features
from their training data. 
However, this notion has been recently challenged by works including \cite{kirichenko2022last, izmailov2022feature, lee2022surgical, mehta2022you, rosenfeld2022domain}. These studies demonstrate that the model trained on source data has already learned rich features, and that simply retraining the last linear layer 
(also called DFR \cite{kirichenko2022last})
with target data can already achieve
excellent
performance. This suggests that the real bottleneck is not in feature learning, but rather in how to effectively re-weight the last layer features. However, in these works, the last layer is retrained with a large data from the target distribution, making them impractical in many real-world scenarios where target data are typically scarce. Our work specifically targets this direction, seeking more {data-efficient} solutions than standard linear probing.\looseness=-1

\textbf{Few-shot adaptation and sample efficiency.} 
In scenarios where labeled target data are available but scarce, adapting the last layer to the target distribution in a data-efficient manner presents a significant challenge. The study most relevant to our paper is \cite{chen2023project}, which considers the same setting as ours, where the number of available target data for linear probing is very small (e.g., 2 to 32 per class). They propose \prot\ that first learns a projection that maps the source data embeddings to orthogonal informative directions. Then, it passes the target data embeddings through the learned projection and perform linear probing using these projected embeddings.
\cite{teney2022evading}'s method shares a similar intuition but does so with an additional loss term instead of explicitly enforcing orthogonality. However, we find that in both methods, the projection layer learned on the source data may miss 
directions in the embeddings that are important for the target distribution, as we will theoretically demonstrate.
In contrast, our method does not suffer from this issue. We include 
these two methods as baselines in our experiments.
\cite{zhu2022progressive} and \cite{zhang2022few} also utilize both source and target data to achieve few-shot adaptation, albeit in different ways. However, their methods modifies the pretrained encoder, which could lead to overfitting on the limited dataset and degrades representation quality. We provide a comparison and further discussion of these methods in Appendix \ref{apdx: additional_comp}.

\looseness=-1

\section{Problem Formulation}

We consider adapting a model to make accurate predictions on a shifted distribution in the few-shot setting, where we have access to a large source data but only a few examples from the shifted target distribution.

Formally, we have a source distribution $\mathscr{P}^{s}$ and a target distribution $\mathscr{P}^{t}$. The two distributions differ, for instance, due to data being collected at different times, from various regions, or in distinct environments.
Similar to \cite{chen2023project}, we assume that the supports of the labels are the same in both distributions, while the supports of the inputs may or may not be the same. This encompasses both scenarios of subpopulation shift and domain generalization, as defined in \cite{koh2021wilds}.
The source dataset $\mathcal{D}^s=\{\vct{x}_i^s, \vct{y}_i^s\}_{i=1}^n$, is composed of $n$ examples from a set of classes $\mathcal{C}$ drawn from the source distribution $\mathscr{P}^s$, while the target dataset $\mathcal{D}^t=\{\vct{x}_i^t, \vct{y}_i^t\}_{i=1}^m$ consists of $m$ examples from $\mathcal{C}$ drawn from the target distribution $\mathscr{P}^t$. In the few-shot setting we consider, $m \ll n$, where 
$m$ is very small (e.g., in our experiments, we consider 2 to 32 examples per class). 
After adaptation, the model is evaluated on a held-out test set 
from the target distribution.



Note that our setting differs from prior works on zero-shot generalization \cite{sagawa2019distributionally,kumar2022fine,wortsman2022robust} under distribution shift, where the model is exclusively trained on the source data $\mathcal{D}^s$ and directly evaluated on the target distribution. 
A very small amount of target data that can be realistically obtained in many settings, 
is often essential to deal with arbitrary distribution shift,
as we will confirm in our work.
Closer to our setting are \cite{kirichenko2022last,izmailov2022feature,lee2022surgical}, which perform linear probing with target data on a pre-trained model that is fine-tuned on the source data. 
However, such methods are not very data-efficient, and 
may perform poorly when target data is very small. We specifically consider the adaptation with \textit{a few} target examples, which is also recently considered in \cite{chen2023project}. \looseness=-1

\section{\ours\ (Mix \& Probe): Data-efficient Few-shot Adaptation to Target Domains}
\label{sec: method}

In this section, we first discuss the challenges and considerations of few-shot adaptation to distribution shift, and then introduce our method, \ours, to overcome these challenges.\looseness=-1

\textbf{Challenges \& considerations.} Adapting a model to new domains with only a few examples from the target domain is very challenging for the following two reasons: 
First, while last-layer retraining on large target data outperforms end-to-end fine-tuning 
\cite{kirichenko2022last,izmailov2022feature,rosenfeld2022domain,yang2023change},
this approach is not very data-efficient and poses a risk of
overfitting when only \textit{a few} examples from the target distribution are available (\textit{c.f.} \cite{chen2023project}, and our figures in Sec \ref{sec: exp} and \ref{apdx: addtional_exp}). Second, adaptation to the target domain should 
take maximum advantage of the few available target examples to achieve optimal performance.
Indeed, the prior work that first find a diverse set of features from the source and re-weight them using the target data \cite{teney2022evading,chen2023project} may yield sub-optimal performance. This is because 
some features that are important for the target domain, but seem unimportant in the source data, may be missed.
We will confirm this theoretically in Sec. \ref{sec: theory}. \looseness=-1

\subsection{Linear probing on mixed source and target embeddings}

We present our method, \ours\, that 
effectively
adapts a model with only a few data from the target distribution.\looseness=-1

\textbf{Key idea.} The key idea of our method is to take advantage of the large source data and a few target examples to \textit{construct} a new \textit{large} dataset that contains generalizable information about the target domain. If this can be done, then last-layer retraining (linear probe) on the new constructed data achieves a superior performance as: (1) it does not overfit the (noise in the) few target examples, and (2) as the new constructed data contains information of both the source and target domains, the linear probe can take advantage of all the relevant information during training to achieve superior performance on the target distribution. 

To construct a large dataset that contains information about the target data, our main idea is to leverage the large available source data and incorporate information about the target into it. To do so, we use a pre-trained backbone model $f:\mathcal{X}\rightarrow \mathbb{R}^d$ to map the source and target examples to a $d$-dimensional embedding space. Then, we create a new large embedding dataset by taking every source example and \textit{mix} its embedding with the embedding of a randomly-chosen target example in the same class. For mixing the embeddings, we simply take linear combinations of the embeddings of the source and target examples.  
In the new dataset, every example is a combination of a target example with a distinct source example. Therefore, the model trained on the mixed embeddings learns the target information in a more generalizable manner, without overfitting (particular noise in) the few target examples.
In addition, the new dataset still contains the information from the source data. Hence, the valuable information of the source data that are relevant to the target domain but do not present in the few target examples can be easily leveraged by the linear probe to achieve superior performance.

\textbf{Formal description. } 
Formally, for source dataset $\mathcal{D}^s=\{\vct{x}_i^s, \vct{y}_i^s\}_{i=1}^n$ with $n$ examples and a target dataset $\mathcal{D}^t=\{\vct{x}_i^t, \vct{y}_i^t\}_{i=1}^m$, with $m\ll n$ examples, \ours\ has the following two steps:

\textbf{(1) Mixing source \& target.} we construct a new dataset of embeddings with their labels, expressed as follows:
\begin{align}
   \embmx = \{  (1-s) f(\vct{x}_i^s) + s f(\vct{x}_{j_i}^t), ~ ~ y_i^s  \}_{i=1}^n,
\end{align}
where $j_i \overset{\text{unif.}}{\sim} \{j~|~ y_j^t=y_i^s, j\in[m] \}$, meaning that each $j_i$ is uniformly randomly sampled from the indices of the target data whose labels equal $y_i^s$. That is, for each source example, we randomly select one target example with the same label and take a weighted average of their embeddings to create a new embedding, where $s$ is the weight for the target example and serves as a hyperparameter. 

\textbf{(2) Linear probe on mixed embeddings.} Finally, we train a linear classifier $g$ on the mixed embeddings by minimizing loss function $l(\cdot, \cdot)$: \vspace{-1mm}
\begin{align}
    \label{eq: train_on_mixed}
    \min_{g} \hat{\E}_{ (\vct{z}, y)\in\embmx } l(g(\vct{z}), y ),
\end{align}
where $\hat{\E}$ denotes the empirical expectation. 

\textbf{\ours\ vs (manifold) Mixup.} We highlight the difference between \ours\ and (Manifold) Mixup \cite{zhang2017mixup,verma2019manifold}.
Mixup improves the in-distribution generalization by training the model on mixed \textit{inputs and labels} of pairs of examples, using \textit{randomly sampled weights}. In contrast, our method is specifically designed for data-efficient adaptation and involves mixing embeddings of source and target data with a \textit{fixed} weight, \textit{without} mixing labels of examples in difference classes. We will show \ours\ outperforms Mixup in our experiments in Section \ref{sec: exp}. \looseness=-1

\textbf{Using a fixed $s$.} 
We note that 
randomly sampling $s$, e.g. from a Beta distribution as is done in Mixup, 
is not effective, as 
$s$ determines the proportions of source and target data in the mixture. Hence, it should be set appropriately for every dataset. When $s$ is randomly selected from a Beta distribution, its expected value is $0.5$,  implying equal weights for source and target data, which can lead to suboptimal performance (\textit{c.f.} Figure \ref{fig: subpopshift_curve}).\looseness=-1

\textbf{The selection of $s$.} 
Prior work \cite{kirichenko2022last,chen2023project,teney2022evading}, involve hyperparameter tuning 
based on a hypothetical large validation set from the target distribution. 
Indeed, addressing hyperparameter selection in 
the few-shot scenario has not been explored before. In Sec. \ref{sec: cross_val}, we demonstrate that $2$-fold cross-validation with the few available target examples, 
can lead to reasonable hyperparameter selection. Our method outperforms prior work, whether using this strategy or a hypothetical large validation set. 
\looseness=-1

\section{Theoretical Analysis}\label{sec: theory}

In this section, we provide 
theoretical analysis in two scenarios that comprehensively showcase the advantages of \ours. Firstly, we compare \ours\ with \prot\ \cite{chen2023project}, which learns a projection of pre-trained embeddings of source data before performing linear probing on target data. We demonstrate that \prot\ can overlook important features for the target distribution, whereas our method, which directly performs linear probing on a mixture of source and target data, does not have this issue. Secondly, we consider a subpopulation shift model and demonstrate the trade-off in selecting the mixing weight, as well as the advantage of our method over using solely source or solely target data.\looseness=-1

\subsection{A case study of domain generalization: the advantage of \ours\ over 
prior work
}\label{sec: theory_domain_gen}

Recent works \cite{teney2022evading, chen2023project} suggested leveraging the large source data to learn a projection from the original embedding space to a lower-dimensional space that preserves a small and diverse (orthogonal) set of features from the source data. 
Linear probing on the target domain using these orthogonal directions has been found to be more data-efficient. But, the drawback of this approach is that the projection, being learned solely from source data, may disregard directions that are less significant in the source but critical for the target distribution. Next, we show that such methods may result in sub-optimal performance.\looseness=-1

In our analysis, we make assumptions about the distribution of embeddings on which different methods are applied.
In the source distribution, each label $y$ is uniformly drawn from $\{-1, 1\}$ and the corresponding embedding $\vct{z}$ is given by $\vct{z} = y\vct{v} + \vct{\xi}$,
where \( \vct{v}\overset{\text{uni}}{\sim} \{ \vct{v}_1, \vct{v}_2 \} \), \( \vct{\xi}=\xi\vct{v}_3 \) with \( \xi\sim\mathcal{N}(0, \sigma) \). $\vct{v}_1, \vct{v}_2, \vct{v}_3$ are orthogonal unit vectors. Simply put, in the embedding space, the label information is carried in two directions, \( \vct{v}_1 \) and \( \vct{v}_2 \), with each embedding encoding label information in one of these two directions, while having some noise in the third direction, \( \vct{v}_3 \).

To model the shift in distribution, we consider the following target data distribution where each embedding is given by $\vct{z} = y\vct{v} + \vct{\xi}$
where \( \vct{v}\unif \{ \vct{v}_2, \vct{v}_3 \} \), \( \vct{\xi}=\xi\vct{v}_1 \) with \( \xi\sim\mathcal{N}(0, \sigma) \). The shift is such that, in the target distribution, the label information is carried in \( \vct{v}_2 \), which is the shared part with the source distribution, but it differs in that it can also be carried in \( \vct{v}_3 \) and never in \( \vct{v}_1 \). Instead, \( \vct{v}_1 \) consists only of noise in the target distribution.

Assuming we have access to \( n \) source data embeddings in dataset $\mathcal{E}^s$  and \( m \) target data points in dataset $\mathcal{E}^t$ . To model the case where the source data is very large, we let \( n\rightarrow \infty \).

This simple model allows seeing the failure mode of previous methods, including  \cite{chen2023project,teney2022evading}. Specifically, we consider \prot\  \cite{chen2023project}, while noting that \cite{teney2022evading} shares a similar underlying mechanism, and is nearly equivalent to \prot\ for linear models. We defer detailed discussion to \blue{Appendix \ref{apdx: teney}}. \looseness=-1

\prot\ first learns a projection layer $\mtx{\Pi}^*=[\vct{\Pi}_1^*,\dots,\vct{\Pi}_p^*]\in\mathbbm{R}^{d\times p}$ using the source data. It learns the columns in $\mtx{\Pi}^*$ such that they are orthogonal to each other. The formalization in \cite{chen2023project} is as follows:
\looseness=-1
\begin{align}
\label{eq: step_1}
    \vct{\Pi}_i^* = \arg\min_{\Pi_i} \hat{\E}_{(\vct{z}, y)\in\mathcal{E}^s} l( \vct{\Pi}_i^\top \vct{z}, y)\\
    \nonumber
    ~~s.t.~~ \vct{\Pi}_i^* \perp \vct{\Pi}_j^*\text{ for all } j<i,
\end{align}
where $l$ is the loss function. Then it performs linear probing on the projected embeddings of the target data
\begin{align}
\label{eq: step_2}
    \vct{w}_{\prot}^* = \arg\min_{\vct{w}} \hat{\E}_{(\vct{z}, y)\in\mathcal{E}^t} l( \vct{z}^\top \mtx{\Pi}^* \vct{w}, y  ).
\end{align}
Intuitively, in the source distribution, $\vct{v}_3$ does not carry any label information, thus it is not learned by \( \vct{\Pi}^* \) in Eq. \eqref{eq: step_1}. Thus, no information carried along \( \vct{v}_3 \) would exist after passing through \( \vct{\Pi} \). Therefore, any linear model on top of the projected embeddings would fail on any test example \( \vct{z} \) that contains \( y\vct{v}_3 \), which accounts for half of the target data.

In contrast, \ours\ can succeed because we directly train a linear model on the mixture of source and target embeddings. The linear model, having been exposed to \( \vct{v}_1, \vct{v}_2, \vct{v}_3 \),  will effectively learn all these features.

The following theorem summarizes the above discussion:
\begin{theorem}\label{thm: pro2_vs_ours}
Assuming that the noise is sufficiently small, \( \sigma = o(1) \), and \( l(.,.) \) is the MSE loss, then w.h.p.:\\
(1) The test loss on target data achieved by \prot\ can always be lower bounded by a constant order:
    \begin{align}
        \nonumber
        \E_{(\vct{z, y}) \in \text{target dist.} } l (\vct{z}^\top \mtx{\Pi}^*\vct{w}_{\prot}^*, y) \geq 0.5 - o(1). 
    \end{align}
(2) \ours\ learns \( \vct{w}_{\ours}^* = \arg\min_{\vct{w}} \hat{\E}_{ (\vct{z}, y)\in\embmx } \allowbreak l(\vct{z}^\top  \vct{w}, y) \), as described in Section \ref{sec: method}. When $\exists\epsilon=\Theta(1)$ s.t. $\epsilon<s<1-\epsilon$, it achieves a test loss on target data that can be upper bounded by:
    \begin{align}
        \nonumber
        &\E_{(\vct{z, y}) \in \text{target dist.} } l (\vct{z}^\top \vct{w}_{\ours}^*, y) \\
        \nonumber
        \leq& \bigg(\sigma + O(\sqrt{\frac{\log m}{m}})\bigg)^2 = o(1).
        \vspace{-.2cm}
    \end{align}
\end{theorem}

\begin{figure}
    \centering
\includegraphics[width=.2\textwidth]{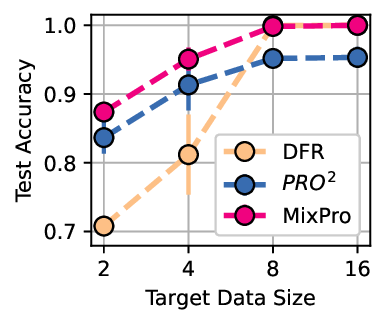}
\vspace{-.6cm}
    \caption{Comparison between methods on synthetic data.}
    \label{fig:synthetic_pro2}
\vspace{-.6cm}
\end{figure}

\textbf{The necessity of mixing with source data (advantage over DFR).}
Here, we further discuss why mixing target data with source data is important, compared to DFR, which solely trains on the target data. We provide a rough intuition by considering the extreme case where the noise is very large, \( \sigma = \omega(1) \). In this scenario, with probability of at least $1-\delta$, the test loss on the target distribution $\E_{(\vct{z, y}) \in \text{target dist.} } l (\vct{z}^\top \vct{w}_{\ours}^*, y) $, takes the form:
\begin{align}
    \nonumber
    (\psi(s, \sigma) +  O(s^2\sigma^2\sqrt{\frac{\log(1/\delta)}{m}}) )^2,
\end{align}
where $\psi(s, \sigma)$ is a function of $s$ and $\sigma$, independent of $m$ and $\delta$.
See explanation in Appendix \ref{apdx: explanation_large_sigma}.
Note that using \( s=1 \) in our method corresponds to DFR, i.e., using only the target data. We observe that reducing \( s \), or placing more weight on the source data, diminishes the second term, which reflects the error due to the interaction of a small sample size $m$ and noise $\sigma$. Next, we will consider another example that allows us to provide a more fine-grained analysis, illustrating the trade-off involved in selecting the value of \( s \). In Fig. \ref{fig:synthetic_pro2}, we compare \ours\ with DFR and \prot\ using synthetic data similar to that in our analysis (see details in Appendix \ref{apdx: synthetic}).

\subsection{A case study of subpopulation shift: 
trade-off between learning target and overfitting noise}\label{sec: theory_subpop}

Here, we adopt a variation of the subpopulation shift
model used in prior work \cite{sagawa2020investigation,wald2021calibration,aubin2021linear,yao2022improving,xue2023understanding} to provide a fine-grained analysis demonstrating the benefits of \ours\ over DFR, which only uses target data. We will show that \ours\ makes a trade-off between learning from the target data 
and overfitting noise, outperforming both DFR  and only training on the source data.

We first define a family of data distributions \( \mathscr{E}(\mu) \) parameterized by \( \mu \), where each embedding-label pair \( (\vct{z}, y) \) is generated as follows. First, sample the label \( y \) uniformly from \( \{1, -1\} \), then sample \( a \) from \( \{1, -1\} \) with probabilities \( \Pr(a=y) = \mu \) and \( \Pr(a=-y) = 1-\mu \). Finally, generate the embedding as \( \vct{z}=[z_1, z_2, \vct{\xi}^\top]^T \) in \( \mathbbm{R}^d \), where \( z_1\sim\mathcal{N}(y, \sigma_1^2) \), \( z_2\sim\mathcal{N}(a, \sigma_2^2) \), and \( \vct{\xi} \sim\mathcal{N}(0, \frac{\sigma_\xi^2}{d-2}\mtx{I}_{d-2}) \). This model and its variations, widely utilized in many prior works \cite{sagawa2020investigation,wald2021calibration,aubin2021linear,yao2022improving,xue2023understanding}, are designed to simulate a specific type of distribution shift known as subpopulation shift or spurious correlation. The first coordinate \( z_1 \)
, the \textit{core} feature, 
carries the label information, while the second coordinate $z_2$, the \textit{spurious} feature, carries information about an attribute \( a \), with its correlation to the label \( y \) is dictated by \( \mu \). When \( \mu > 1/2 \), \( a \) and \( y \) are correlated. The remaining coordinates represent random noise. 

Specifically, we consider the case where the source distribution is \( \mathscr{E}(p_\spu) \) with \( p_\spu > 1/2 \), and the target distribution is \( \mathscr{E}(1/2) \) where \( a \) is completely independent of \( y \).

\begin{figure}
\centering
\vspace{-2mm}
\subfigure[\small s=0(only source)\label{subfig: s_0}\looseness=-1]{
	\includegraphics[width=0.14\textwidth, valign=t]{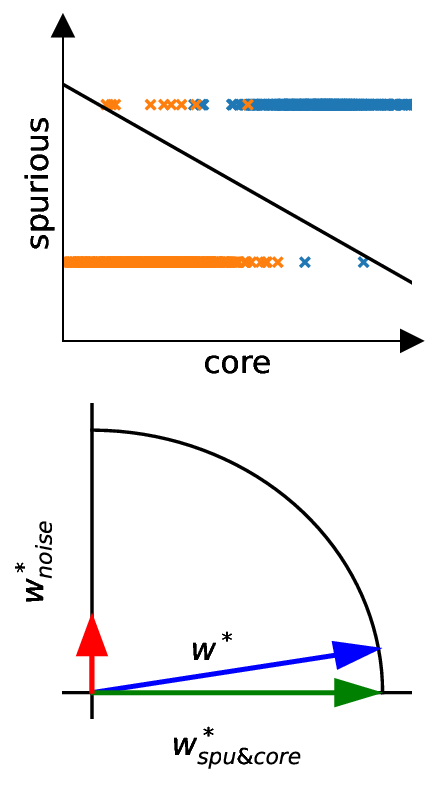}
	}
\subfigure[$s=0.4$  \label{subfig: s_0.4}]{
	\includegraphics[width=0.14\textwidth, valign=t]{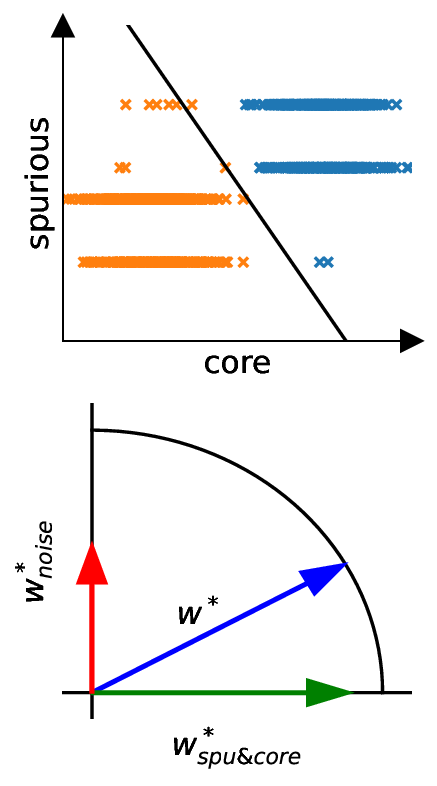}
	}
\subfigure[$s=1$ (DFR)  \label{subfig: s_1}]{
	\includegraphics[width=0.14\textwidth, valign=t]{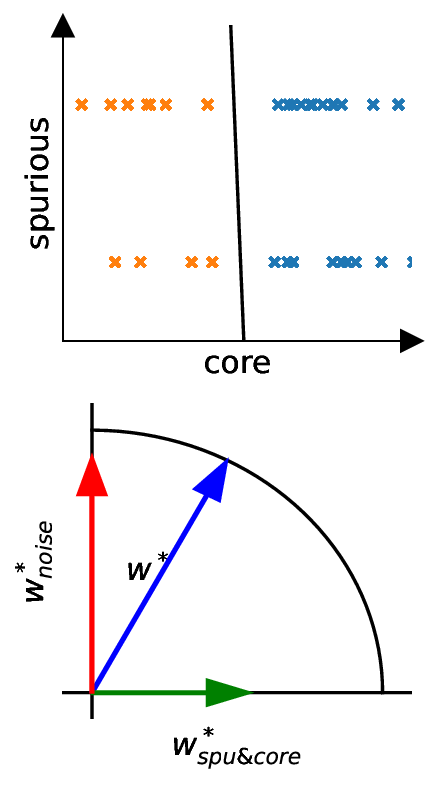}
	}
\vspace{-2mm}
\caption{\textbf{A trade-off between 
learning target information and preventing overfitting noise.} \textbf{Top: incorporating more target information with increasing $s$.
} The orange and blue dots represent examples from two classes 
in the spurious and core coordinates. The black line represents the decision boundary of the learned model along these first two coordinates. 
\textbf{Bottom: With increasing $s$, the model has a larger component in the coordinates for random noise.} 
The normalized weights vector is displayed in blue. Its component in the first two coordinates is shown in green, and the component in the directions of noise is shown in red.
}
\label{fig: visualization}
\vspace{-3mm}
\end{figure}

\textbf{The large source data is effectively less noisy than the small target data.} 
Consider the very small sample size of the small target data. When the dimension is significantly larger than the number of examples, there is a high probability that 
the noise in each target example is asymptotically orthogonal to that in every other example, according to the concentration behavior of random Gaussian vectors. Due to this orthogonality, the noise appear unique to each example and thus the model can 
learn the noise to predic labels, especially when the scale of the noise is significantly large. 
Similarly, in real world data, 
noises are spread out across the entire embeddings with low correlation among themselves, as opposed to the useful features, which are usually concentrated in a lower-dimensional subspace \cite{papyan2017convolutional,morcos2018importance,oymak2019generalization,huh2021low}.
In contrast, in the source data, which comprises many more examples, 
the noise 
cannot be used for prediction of the label.
Therefore, the small target data is more affected by noise, while the source data is less so. 

\begin{figure}[!t]
    \centering
    \subfigure[vary severity of shift\label{subfig: vary_p}]{\includegraphics[width=.2\textwidth]{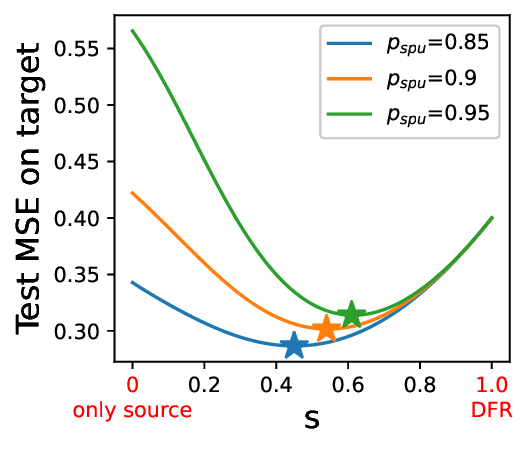}
    }
    \subfigure[vary $\frac{\text{noise}}{\text{target data size}}$ \label{subfig: vary_r}]{\includegraphics[width=.2\textwidth]{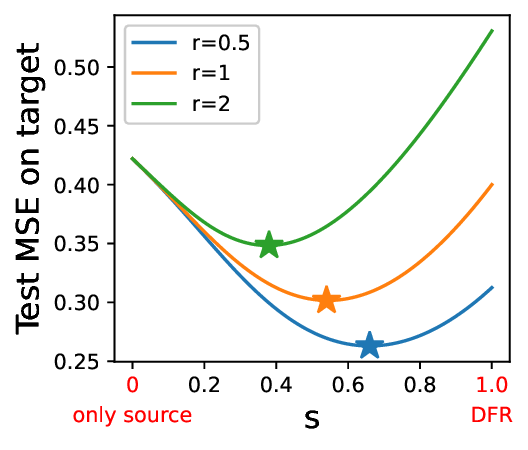}
    }
    \vspace{-2mm}
    \caption{ Left: A larger \( p_\spu \), indicating a more severe shift, necessitates a larger \( s \); Right: A larger \( r=\frac{\sigma_\xi^2}{m} \), signifying greater noise or a smaller target data size, necessitates a smaller \( s \).}
    \label{fig: subpopshift_curve}
    \vspace{-8mm}
\end{figure}

\textbf{Visualizing the trade-off between learning target information and preventing overfitting noise.} 
First, we visualize the decomposition of the weights \( \vct{w}^* \) of a linear model trained using \ours\ in Figure \ref{fig: visualization}. Specifically, we break down the weights into two components: \( \vct{w}^*_{s,c} \) and \( \vct{w}^*_{noise} \). The former is the component of \( \vct{w}^* \) in the span of the spurious and core features, while the latter is the component in the orthogonal complement, corresponding to noise. From Figure \ref{fig: visualization}, we observe that as \( s \) increases, the data distribution leans more towards the target distribution, leading the model to align more with the core coordinate (top row). However, simultaneously, the model also learns more noise (bottom row). Fig. \ref{subfig: s_1} shows that at \( s=1 \) (using only target data, i.e., DFR \cite{kirichenko2022last}), the model learns excessive noise. In contrast, Fig. \ref{subfig: s_0} shows that at \( s=0 \) (using only source data), the model learns predominantly from the spurious feature. As shown in Fig. \ref{subfig: s_0.4}, by setting an intermediate \( s \), we can strike a balance between learning target information and avoiding overfitting noise.

\textbf{Formal theoretical results.} Next, we present our formal analysis. We make the following assumptions:
\begin{assumption}\label{assump: asymptotic}
(1) Similar to \cite{sagawa2020investigation}, we assume that the second coordinate, carrying the spurious attribute \( a \), has a smaller variance than the first coordinate, which carries the true label information. This makes the spurious attribute easier to learn. Specifically, we set their corresponding variances to be \( \sigma_1 > 0 \) and \( \sigma_2 = 0 \). (2) To obtain closed-form results, we examine the high-dimensional asymptotic limit where \( n \), \( d \), and \( m \) tend to infinity. To account for the small size of the target data, we assume that \( \frac{n}{d} \rightarrow \infty \), while \( \frac{m}{d} \rightarrow 0 \), ensuring that \( m \) is much smaller than \( n \). (3) Additionally, we assume that \( \sigma_\xi \rightarrow \infty \), while  ratio between the noise and the sample size of the target data, i.e., $\frac{\sigma_\xi^2}{m}$, remains constant at $r$. This allows us to observe the effects of the target dataset's size and noise level.
\end{assumption}


We perform an asymptotic analysis to derive the exact expression of the test performance on the target data, and plot it to observe the benefit of mixing source and target data, as well as how it interacts with the severity of distribution shift, noise level, and target dataset size.
\begin{theorem}\label{thm: subpop}
Consider the linear model $\vct{w}_{\ours}^*$ learned by \ours\ with $l(.,.)$ being MSE loss with $\ell_2$ regularization. With assumption \ref{assump: asymptotic}, the test loss on the target distribution achieved by  $\vct{w}_{\ours}^*$ can be expressed in closed form in terms of $p_\spu, \sigma_1, r$ and $s$, as  shown in \blue{Appendix \ref{apdx: subpop}}.
\end{theorem}
\vspace{-2mm}
We plot the expression in Figure \ref{fig: subpopshift_curve}. Note that \( s=0 \) corresponds to using only source data and \( s=1 \) to using only target data (DFR). Across all plots, we observe that the test loss on the target distribution first decreases and then increases as \( s \) increases. An intermediate value of \( s \) yields the best performance, confirming that a mix between the two performs better than using either only source or only target (DFR). In Fig \ref{subfig: vary_p}, we plot the curves under different \( p_\spu \) values, which indicate the severity of the distribution shift. We see that with a larger \( p_\spu \), implying a greater shift, the optimal \( s \) becomes larger, suggesting more emphasis on the target data. In Fig \ref{subfig: vary_r}, we plot the curves under different \( r \) values, the ratio of noise to target sample size. We observe that a larger \( r \), indicating either greater noise or a smaller number of target examples, results in the optimal \( s \) being smaller, aligning with the intuition that in such cases, we should rely more on the source data to counteract noise. This is also empirically confirmed on real datasets in Figure \ref{fig: s_size} in Appendix \ref{apdx: ablation}.

\section{Experiments}\label{sec: exp}

In this section, we will demonstrate the superior performance of \ours\ over existing baselines across various datasets and using different backbone models. Additionally, we will explore realistic hyperparameter tuning in the few-shot scenario, which is not addressed in prior work.




\begin{figure*}[!t]
    \centering
    \includegraphics[width=.85\textwidth]{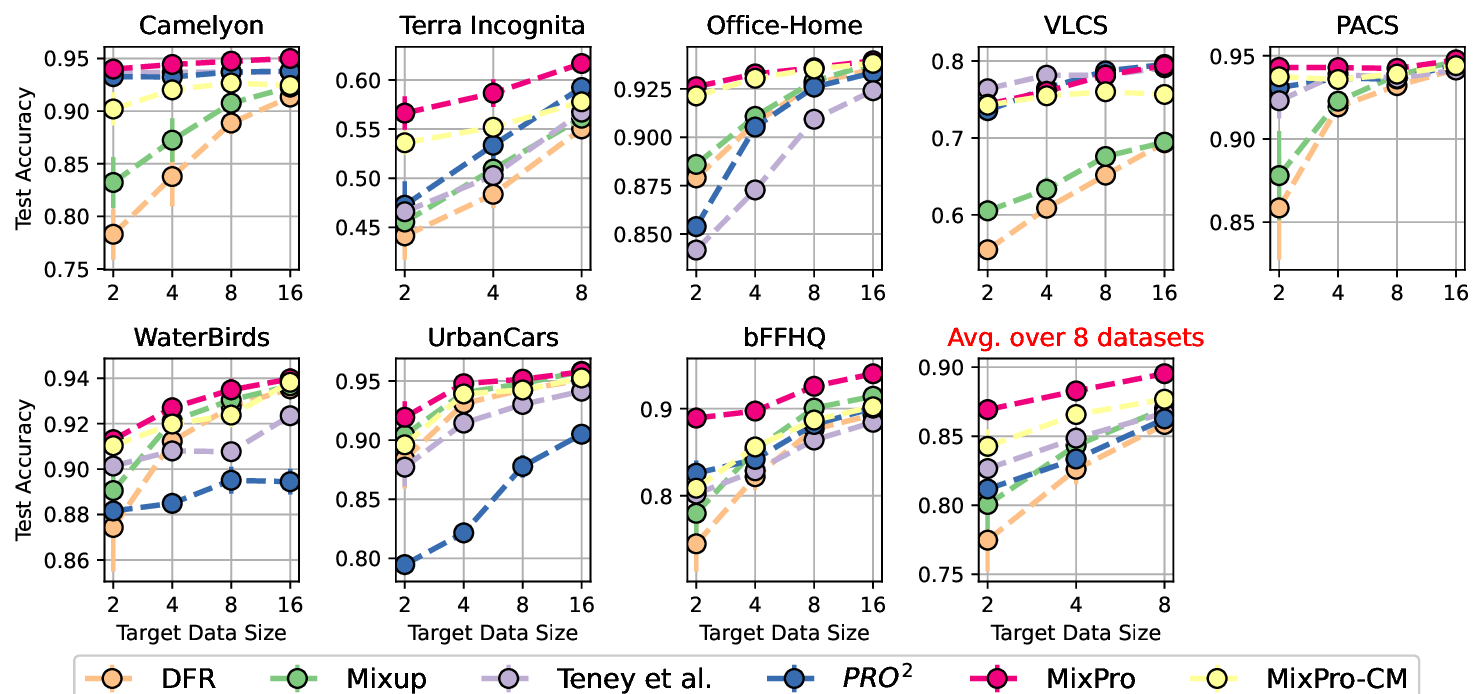}
    \vspace{-2mm}
    \caption{Test accuracy on the target distribution versus target data size for all baselines across the 8 datasets we consider. Here, we use the SWAG-pretrained ViT-L/16 model as the backbone . Overall, while some methods may perform comparably to \ours\ on certain datasets, they falter on others. In contrast, \ours\ consistently achieves the best performance across datasets and data sizes.}
\label{fig:large_val_swagvit}
\vspace{-4mm}
\end{figure*}


\begin{figure*}[!t]
    \centering
    \includegraphics[width=.98\textwidth]{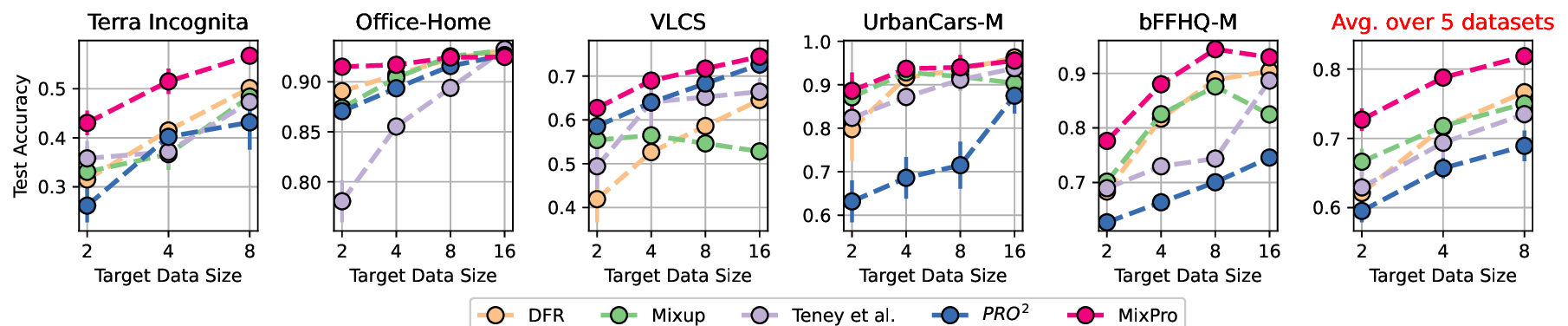}
    \vspace{-2mm}
    \caption{ Results with hyperparameters tuned using cross-validation with only the few given target data. We use the SWAG-pretrained ViT-L/16 model as the backbone. In these scenarios, \ours\ continues to outperform the others as the best method. }
    \label{fig:cross_val_swagvit}
    \vspace{-3mm}
\end{figure*}


\textbf{Datasets.} We consider 8 datasets with different types of distribution shifts. These include 3 subpopulation shift benchmarks: WaterBirds \cite{sagawa2019distributionally}, bFFHQ \cite{kim2021biaswap}, and UrbanCars \cite{li2023whac}; along with 5 domain generalization benchmarks: PACS \cite{li2017deeper}, VLCS \cite{fang2013unbiased}, Office-Home \cite{venkateswara2017deep}, Terra Incognita \cite{beery2018recognition}, and Camelyon17 \cite{koh2021wilds}. Due to space limitations, we defer the details to Appendices \ref{apdx: details_data} and \ref{apdx: details_dist}. For the target dataset size, we consider the few-shot scenario where \{2, 4, 8, 16\} data points per class are sampled from the target distribution {(except for Terra Incognita, for which we use only \{2, 4, 8\} samples, because the smallest class, i.e., class index 9, has only 12 examples)}. The test set is constructed using the remaining data in the target distribution.


\textbf{Baselines.} 
We consider the following four baselines: (1) DFR (Deep Feature Reweighting) \cite{kirichenko2022last}, which performs standard linear probing using the target data. (2) (Manifold) Mixup \cite{zhang2017mixup,verma2019manifold} where we additionally apply Mixup to the embeddings while performing DFR.   (3) \citet{teney2022evading}, which trains multiple models on the source data while minimizing the similarity between these models' gradients. We note that their method does not specify how to apply these models to target data. 
To maximize effectiveness, we train a linear model on the concatenation of the outputs of these models using the target data. The final prediction is thus a weighted combination of these models, with weights determined using the given target data.
(4) \prot\ \cite{chen2023project}, which first learns a linear projection to map the embeddings onto orthogonal directions using source data, and then performs linear probing using target data on the projected embeddings. \looseness=-1

\textbf{Backbone models.} For the backbone model that produces embeddings, we consider two models: (1) The standard ImageNet-pretrained ResNet 50 and (2) the ViT-L/16 model pretrained with SWAG \cite{singh2022revisiting}. These models are publicly available in TorchVision. For model (1), we additionally fine-tune the backbone model on the source data following \cite{kirichenko2022last, izmailov2022feature}, as \cite{rosenfeld2022domain} suggests that this significantly improves the final performance. For model (2), we use the original weights directly, as \cite{mehta2022you} has demonstrated the superior robustness provided by the original SWAG weights.\looseness=-1

\textbf{Hyperparameter tuning.}  In Table \ref{tab: hp}, we present the range of hyperparameters considered for each method, and the ranges are sourced from the corresponding original papers. We test our method and existing baselines using two different approaches to  tune hyperparameters, for a comprehensive evaluation. The first setting (Section \ref{sec: large_val}) follows the practice of prior works \cite{teney2022evading,kirichenko2022last,izmailov2022feature,chen2023project}, where hyperparameters are chosen based on performance on a held-out large validation set from the target distribution. We note that such a large validation set is essentially \textit{hypothetical} and is not typically available in real-world scenarios, particularly in a few-shot setting where only a limited number of target data points are accessible. Prior work that uses this approach does admit that selecting hyperparameters in a realistic manner remains an open problem that they have not addressed yet. Therefore, we also test a second, more realistic setting for hyperparameter selection in few-shot learning, where we tune using cross-validation with only the few available target data. The details are provided in Section \ref{sec: cross_val}.\looseness=-1

\subsection{Results with hypothetical large validation data} \label{sec: large_val}

The results for SWAG-pretrained ViT-L/16 are displayed in Figures \ref{fig:large_val_swagvit}, and results for ResNet50 are deferred to Fig \ref{fig:large_val_resnet} in Appendix \ref{apdx: rn50}. We report the results over 5 runs. Overall, while some methods may perform comparably to \ours\ on certain datasets, they falter on others. In contrast, \ours\ consistently achieves the best performance across  various datasets, data sizes, and backbone models, as shown from the \textit{average} plot in Figures \ref{fig:large_val_swagvit} and \ref{fig:large_val_resnet}. 
The superiority of \ours\ over DFR and Mixup is particularly noticeable in scenarios with small target dataset sizes (e.g., 2 per class). This is because DFR relies solely on the limited target data, leading to an insufficient sample size and consequently a higher risk of overfitting to noise, as elaborated in Section \ref{sec: theory}.  The performance of \prot\ largely depends on the severity of the distribution shift. Specifically, based on our fine-grained results in Appendix \ref{apdx: exp_subpop_all} for the subpopulation shift datasets (WaterBirds, UrbanCars, bFFHQ), \prot\ works better when the target distribution is \textit{balanced} (Fig \ref{fig:subpop_all} top), i.e., all subpopulations are equally represented in the target distribution, despite still being outperformed by \ours. However, in scenarios where the target distribution only consists of the \textit{minority} subpopulations, indicating a more severe shift, \prot\ achieves worse performance compared to others (Fig \ref{fig:subpop_all} middle). The corresponding three plots in Fig \ref{fig:large_val_swagvit}  present the average results over the two types of shifts. Furthermore, on domain generalization datasets (Fig \ref{fig:large_val_swagvit}, top row), which present a severe shift as the target and source come from completely disjoint domains, \prot\ also achieves less satisfactory performance overall. In contrast, \ours\ consistently performs well across datasets. This is supported by our theoretical analysis in Section \ref{sec: theory_domain_gen}, which explains that when the source and target differ significantly, such as when the target has examples not present in the source's support, \prot\ may filter out crucial features of the embeddings during projection, while \ours\ learns all vital aspects, thereby confirming \ours's advantage over \prot. Besides, \citet{teney2022evading} is also surpassed by \ours, for a similar reason; they initially train multiple models exclusively on source data before applying them to the target data, potentially missing relevant features. 
\vspace{-1mm}
\subsection{Results with cross validation using few-shot data}\label{sec: cross_val}
\vspace{-1mm}
Prior works, such as \cite{kirichenko2022last, izmailov2022feature, chen2023project}, tuning hyperparameter---including learning rate, weight decay, and method-specific hyperparameters---by evaluating accuracy on a large held-out validation set. However, this may not be realistic for few-shot adaptation, as it contradicts the assumption of having only a few target data points available. Therefore, to evaluate if the methods can operate effectively in a true few-shot scenario without additional data, we employ standard $k$-fold cross-validation using the limited target data available for hyperparameter selection. Considering the smallest case in our experiments, where the target data size is only 4 (2 per class and 2 classes), we set \( k=2 \) to ensure that each fold has at least one data point per class. 

We conduct experiments on Terra Incognita, VLCS, UrbanCars, bFFHQ and Offie-Home. 
Figs \ref{fig:cross_val_swagvit} and \ref{fig:cross_val_resnet} show the results for different model architectures. We observe that although the overall performance slightly drops, it remains reasonable, as the decrease is mostly within 10\% compared to the case where a large validation set is used, considering that no extra information beyond the given few examples is used. This confirms that cross-validation strategy is promising and is a potential solution to the hyperparameter selection. Importantly, \ours\ still stands out with the highest performance, showing that its advantage is retained even when hyperparameters are selected using very limited information.  \looseness=-1

\subsection{Relaxing the requirement of source data availability}

In some scenarios, the source and target data may not be available simultaneously. In other words, by the time we encounter the target data and wish to adapt to it, the source data may no longer be accessible. However, \ours\ requires both data to be available at the same time to train a linear probe on the mixed source and target embeddings. This implies that one may always need to store the source embeddings to use them when a new target data is encountered. To alleviate this requirement, we propose a simple approach of storing only minimal information about the source data, thereby making the method more flexible.

Specifically, we store only the average embeddings (mean) of each class, resulting in only \#classes embeddings. When we create the mixed embeddings, we combine each target data point with the corresponding class mean from the source. We call this modified method \ours-CM and test it on all the datasets under the same settings as shown in Figure \ref{fig:large_val_swagvit}. From the figure, we observe that \ours-CM performs slightly worse than the original \ours, but still outperforms all other methods. We believe that further exploration into how to improve it, such as retaining more fine-grained information than just class means, would be a valuable direction for future research.




\vspace{-1mm}\section{Conclusion}\vspace{-1mm}
In this work, we propose \ours\ for few-shot adaptation to distribution shifts. \ours\ performs linear probing on mixed embeddings of source and target data,  avoiding overfitting to the small target data while learning target information. We provide theoretical analysis showing its advantages over previous methods and conduct extensive experiments demonstrating its superior performance across various scenarios.
\looseness=-1

\section*{Acknowledgment.} This research was supported by the National Science Foundation CAREER Award 2146492 and Cisco Systems.

\bibliography{reference}
\bibliographystyle{icml2024}

\newpage
\onecolumn
\appendix
\section{Theoretical Analysis}

\subsection{Discussion on \citet{teney2022evading}}\label{apdx: teney}

\cite{teney2022evading} proposes training multiple models ($h_1, \dots, h_{n_{\text{models}}}$) on the embeddings of the source data while minimizing the similarity of these models, before adapting to the target data. We will see that this process closely resembles the projection step of \prot. The objective of \cite{teney2022evading} is defined as:
\begin{align}
    \nonumber
    \min \left(\sum_{i=1}^{n_{\text{models}}}\hat{\E}_{(\vct{z}, y)\in\mathcal{E}^s}l(h_i(\vct{z}), y) + \lambda \sum_{i\neq j}\sum_{(\vct{z}, y)\in\mathcal{E}^s} \text{sim}_{\vct{z}}( h_i, h_j ) \right),
\end{align}
where $\text{sim}_{\vct{z}}( h_i, h_j )$ is the inner product between the gradients of two models with respect to the input $\vct{z}$.

Considering that $h$'s are linear models defined by $h_i(\vct{z}) = \vct{\theta}_i^\top \vct{z}$, the gradient with respect to any input $\vct{z}$ is just the model weights, $\nabla_{\vct{z}} h_i(\vct{z}) = \vct{\theta}_i$. Consequently, the objective becomes:
\begin{align}
    \nonumber
    \min_{\vct{\theta}_1, \dots, \vct{\theta}_{n_\text{models}}} \left(\sum_{i=1}^{n_{\text{models}}}\hat{\E}_{(\vct{z}, y)\in\mathcal{E}^s}l(\vct{\theta}_i^\top\vct{z}, y) + \lambda' \sum_{i\neq j} \vct{\theta}_i^\top \vct{\theta}_j \right).
\end{align}
This goal bears a strong resemblance to that of \prot\, where $\vct{\theta}_i$'s can be considered analogous to the projection vectors $\vct{\Pi}_i$'s in \prot\. The objective above is to align each projection's output with the label $y$, while promoting orthogonality among the projections. The only difference is that \cite{teney2022evading} achieves this through a regularization term, whereas \prot\ enforces exact orthogonality. We hypothesize that this additional flexibility is likely why \cite{teney2022evading} generally outperforms \prot\ with hyperparameter tuning in our experiments in Section \ref{sec: exp}.

\subsection{Proof of Theorem \ref{thm: pro2_vs_ours}}
\subsubsection{Analysis of \ours}\label{apdx: domaingen_ours}

\textbf{Notations.} Let $\{\tilde{\vct{z}}_i\}_{i=1}^n$ denote the mixed embeddings where $\tilde{\vct{z}}_i = (1-s)\vct{z}_i^s + s\vct{z}_{j_i}^t $. 

\textbf{Assumptions.} As mentioned in Section \ref{sec: theory_domain_gen}, we assume $n \rightarrow \infty$ and $\sigma=o(1)$. We also assume that $\exists\epsilon=\Theta(1)$ s.t. $\epsilon<s<1-\epsilon$ as mentioned in the statement of Theorem \ref{thm: pro2_vs_ours}.

Without loss of generality, we consider $\vct{v}_1, \vct{v}_2, \vct{v}_3$ to be $\vct{e}_1, \vct{e}_2, \vct{e}_3$, respectively, i.e., the three standard basis vectors.

By the closed form expression of the minimizer of the MSE loss, we can write the weight of the linear model found in \ours\ as
\begin{align}
\label{eq: closed_form}
    \vct{w}_{\ours}^* =  \hat{\mtx{\Sigma}}_\mixed^{-1} \hat{\vct{u}}_\mixed,
\end{align}
where 
\begin{align}
    \hat{\mtx{\Sigma}}_\mixed \coloneqq \frac{1}{n}\sum \tilde{\vct{z}}_i\tilde{\vct{z}}_i^\top,
\end{align}
and 
\begin{align}
    \nonumber
   \hat{\vct{u}}_\mixed \coloneqq & \frac{1}{n}\sum \tilde{\vct{z}}_i \tilde{y}_i.
\end{align}
Next, we analyze $\hat{\mtx{\Sigma}}_\mixed^{-1} $ and $\hat{\vct{u}}_\mixed$.

Firstly, we decompose $\hat{\mtx{\Sigma}}_\mixed $ as
\begin{align}
\hat{\mtx{\Sigma}}_\mixed = &
    (1-s)^2\frac{1}{n}\sum \vct{z}_i^{s}\vct{z}_i^{s \top}+s^2 \frac{1}{n}\sum \vct{z}_{j_i}^t \vct{z}_{j_i}^{t\top} +(1-s)s\frac{1}{n}\sum\vct{z}_i^{s}\vct{z}_{j_i}^{t\top} + (1-s)s\frac{1}{n}\sum\vct{z}_{j_i}^{t}\vct{z}_i^{s\top} \\
    = & \mtx{\Sigma}_\mixed + \Delta \mtx{\Sigma}_\mixed,
\end{align}
where
\begin{align}
    \nonumber
    \mtx{\Sigma}_\mixed \coloneqq \E \hat{\mtx{\Sigma}}_\mixed =  \mtx{\Sigma}_0 + \mtx{A}
\end{align}
with
\begin{align}
    \mtx{\Sigma}_0 =  \begin{bmatrix}
        \frac{1}{2}(1-s)^2 & \frac{1}{4}(1-s)s & \frac{1}{4} (1-s)s  \\
        \frac{1}{4} (1-s)s & \frac{1}{4}s^2 +\frac{1}{4}(1-s)^2 +\frac{1}{4} & \frac{1}{4}(1-s)s  \\
        \frac{1}{4} (1-s)s & \frac{1}{4} (1-s)s & \frac{1}{2}s^2  \\ 
    \end{bmatrix}
\end{align}
and
\begin{align}
    \mtx{A} = \begin{bmatrix}
        s^2 \sigma^2 & 0 & 0 \\
        0 & 0 & 0\\
        0 & 0 & (1-s)^2 \sigma^2
    \end{bmatrix}.
\end{align}





By applying concentration bounds for Gaussian vectors and matrices (Theorems 6.1 and 6.3 of \cite{wainwright2019high}) we can obtain the following. We omit the exact derivation, but the process is similar to, for example, Section B.2.2 in \cite{xue2023understanding}.
\begin{align}
\label{eq: delta_sigma}
    \|\Delta \mtx{\Sigma}_\mixed \|_2 
   \leq & O(s^2\sigma^2 \sqrt{\frac{\log1/\delta}{m}}) + O( s(1-s)\sigma \sqrt{\frac{\log1/\delta}{m}} ),
    ~~~\text{with probability $\geq 1-\delta$}
\end{align}
Here, as we assume $\sigma = o(1)$, we further obtain $
   \|\Delta \mtx{\Sigma}_\mixed \|_2  = O( \sigma\sqrt{\frac{\log 1/\delta}{m}} )$.
Since $\exists\epsilon=\Theta(1)$ s.t. $\epsilon<s<1-\epsilon$, we have\looseness=-1
\begin{align}
\label{eq: sigma_0_O1}
    \|\mtx{\Sigma}_0\|_2 \leq O(1), \|\mtx{\Sigma}_0^{-1}\|_2 \leq O(1) 
\end{align}
By applying the classical result for the inverse of perturbation \cite{demmel1992componentwise} and combining it with \ref{eq: sigma_0_O1}, we obtain
\begin{align}
\label{eq: inverse}
    \|\hat{\mtx{\Sigma}}_\mixed^{-1} - \mtx{\Sigma}_0^{-1} \|_2 \leq   O(\|\mtx{A} + \Delta \mtx{\Sigma}_\mixed \|_2) = O(  \sigma\sqrt{\frac{\log 1/\delta}{m}} ) ~~~\text{with probability $\geq 1-\delta$}.
\end{align}
Next, we examine $ \hat{\vct{u}}_\mixed $, which can be further decomposed into
\begin{align}
    \nonumber
   \hat{\vct{u}}_\mixed \coloneqq & \frac{1}{n}\sum \tilde{\vct{z}}_i \tilde{y}_i \\
    \nonumber
    = &\vct{u}_\mixed + \Delta \vct{u}_\mixed,
\end{align}
where
\begin{align}
    \nonumber
    \vct{u}_\mixed \coloneqq \E \vct{u}_\mixed = 
    \begin{bmatrix}
        \frac{1}{2}(1-s)\\
        \frac{1}{2}\\
        \frac{1}{2}s\\
    \end{bmatrix}.
\end{align}
Similarly, we can also bound $\|\Delta \vct{u}_\mixed\|_2$ using Gaussian concentration bounds
\begin{align}
    \nonumber
    \|\Delta \vct{u}_\mixed\|_2 \leq O(\frac{\sigma}{\sqrt{m}} \sqrt{\log1/\delta}) ~~~\text{with probability $\geq 1-\delta$}.
\end{align}
$\mtx{\Sigma}_0^{-1}\vct{u}_\mixed$ can be easily calculated 
\begin{align}
\mtx{\Sigma}_0^{-1}\vct{u}_\mixed = 
\begin{bmatrix}
    1\\
    1\\
    1\\
\end{bmatrix}.
\end{align}
Now we bound the distance between $\mtx{\Sigma}_0^{-1}\vct{u}_\mixed $ and $\hat{\mtx{\Sigma}}_\mixed^{-1}\hat{\vct{u}}_\mixed $. Observe that
\begin{align}
    \nonumber
    \hat{\mtx{\Sigma}}_\mixed^{-1} \hat{\vct{u}}_\mixed = \bigg(\mtx{\Sigma}_0^{-1}-(\mtx{\Sigma}_0^{-1} - \hat{\mtx{\Sigma}}_\mixed^{-1}) 
  \bigg)\bigg(\vct{u}_\mixed
  - (\vct{u}_\mixed - \hat{\vct{u}}_\mixed)
  \bigg).
\end{align}
By Cauchy–Schwarz inequality and realizing that $\|\vct{u}_\mixed\|_2$ and $\| \mtx{\Sigma}_0^{-1} \|_2$ are $\Theta(1)$, we have
\begin{align}
\label{eq: diff_inverse}
    \|\hat{\mtx{\Sigma}}_\mixed^{-1} \hat{\vct{u}}_\mixed - \mtx{\Sigma}_0^{-1}\vct{u}_\mixed\|_2 \leq & O( \| \hat{\mtx{\Sigma}}_\mixed^{-1} - \mtx{\Sigma}_0^{-1} \|_2 ) + O( \| \hat{\vct{u}}_\mixed - \vct{u}_\mixed \|_2 )\\
    \nonumber
    = & O(\sigma\sqrt{\frac{\log(1/\delta)}{m}}) ~~~\text{with probability $\geq 1-\delta$}.
\end{align}
Now we are ready to bound the test loss. We first define the covariance matrix of the target embeddings
\begin{align}
    \nonumber
    \mtx{\Sigma_t} = 
    \begin{bmatrix}
        \sigma^2 & 0 & 0 \\
        0 & \frac{1}{2} & 0\\
        0 & 0 & \frac{1}{2}
    \end{bmatrix}.
\end{align}
By deriving the test loss, leveraging the relation $y=\vct{\beta}_t^\top\vct{z}$ where $\vct{\beta}_t=[0, 1, 1]^\top$ in the target distribution, and 
incorporating Equation \ref{eq: closed_form}, we express the test loss as
\begin{align}
    \nonumber
   \E_{(\vct{z, y}) \in \text{target dist.} } l (\vct{z}^\top \vct{w}_{\ours}^*, y) =  \| \mtx{\Sigma_t}^{1/2}( \vct{\beta}_t - \hat{\mtx{\Sigma}}_\mixed^{-1} \hat{\vct{u}}_\mixed)  \|^2.
\end{align}
Applying Cauchy–Schwarz inequality and incoporating Equation \ref{eq: diff_inverse}  yield
\begin{align}
    \nonumber
    \sqrt{\E_{(\vct{z, y}) \in \text{target dist.} } l (\vct{z}^\top \vct{w}_{\ours}^*, y)} \leq & \| \mtx{\Sigma_t}^{1/2}( \vct{\beta}_t - \mtx{\Sigma}_0^{-1} \vct{u}_\mixed)  \| + \| \mtx{\Sigma_t}^{1/2}( \mtx{\Sigma}_0^{-1} \vct{u}_\mixed - \hat{\mtx{\Sigma}}_\mixed^{-1}\hat{\vct{u}}_\mixed )   \| \\
    \nonumber
    \leq & \sigma + O(\sqrt{\frac{\log(1/\delta)}{m}}).  
\end{align}
Therefore
\begin{align}
    \nonumber
   \E_{(\vct{z, y}) \in \text{target dist.} } l (\vct{z}^\top \vct{w}_{\ours}^*, y)
    \leq & \bigg(\sigma + O(\sqrt{\frac{\log(1/\delta)}{m}})\bigg)^2.
\end{align}
Consequently, with a probability of at least $1-O(\frac{1}{\poly m})$
\begin{align}
    \nonumber
    \E_{(\vct{z, y}) \in \text{target dist.} } l (\vct{z}^\top \vct{w}_{\ours}^*, y)
    \leq & \bigg(\sigma + O(\sqrt{\frac{\log m}{m}})\bigg)^2 = o(1).
\end{align}

\subsubsection{Analysis of \prot }

We first solve $\mtx{\Pi}_1^*$, which is simply given by $\arg\min_{\vct{\Pi}_i} \hat{\E}_{(\vct{z}, y)\in\mathcal{E}^s} l( \vct{\Pi}_i^\top \vct{z}, y)$ without any restriction. Since we use MSE loss, the closed-form solution is given by 
\[
    (\hat{\E}_{(\vct{z}, y)\in\mathcal{E}^s} \vct{z}\vct{z}^\top )^{-1} \hat{\E}_{(\vct{z}, y)\in\mathcal{E}^s}y\vct{z}
\]
As we assume $n\rightarrow \infty$, the empirical expectation equals the true expectation, thus we can calculate that 
\[
    \hat{\E}_{(\vct{z}, y)\in\mathcal{E}^s} \vct{z}\vct{z}^\top =
    \begin{bmatrix}
        1/2 & 0 & 0\\
        0 & 1/2 & 0 \\
        0 & 0 & \sigma_\xi^2
    \end{bmatrix},
\] and 
\[
    \hat{\E}_{(\vct{z}, y)\in\mathcal{E}^s}y\vct{z} =
    \begin{bmatrix}
        1/2 \\
        1/2  \\
        0
    \end{bmatrix}.
\]
Therefore, we obtain that
\[
    \vct{\Pi}_1^* = 
    \begin{bmatrix}
        1\\
        1\\
        0
    \end{bmatrix}.
\]
For the remaining $\vct{\Pi}_i^*$'s ($i\geq 2$), we first write the loss function as follows
\begin{align}
    \nonumber
    \hat{\E}_{(\vct{z}, y)\in\mathcal{E}^s} l( \vct{\Pi}_i^\top \vct{z}, y) = & \E_{(\vct{z}, y)\sim \text{source distribution}} l( \vct{\Pi}_i^\top \vct{z}, y) \\
    \nonumber
    = & \vct{\Pi}_i^\top \begin{bmatrix}
        1/2 & 0 & 0\\
        0 & 1/2 & 0 \\
        0 & 0 & \sigma^2
\end{bmatrix}\vct{\Pi}_i - [1, ~~1, ~~0] ~~\mtx{\Pi}_i + 1.
\end{align}
Note that $\vct{\Pi}_i^*$'s (for $i \geq 2$) have to be orthogonal to $\vct{\Pi}_1^*$, which equals $[1, 1, 0]^\top$, while minimizing the above. Therefore, we substitute $[1, 1, 0] \mtx{\Pi}_i = 0$ into the above, which reveals that $\vct{\Pi}_i^*$ minimizes 
\begin{align}
    \nonumber
  \vct{\Pi}_i^*  =\arg\min_{\mtx{\Pi}_i} \bigg( \vct{\Pi}_i^\top \begin{bmatrix}
        1/2 & 0 & 0\\
        0 & 1/2 & 0 \\
        0 & 0 & \sigma^2
\end{bmatrix}\vct{\Pi}_i  + 1 \bigg).
\end{align}
Note that the RHS is minimized when $\vct{\Pi}_i = \vct{0}$. Therefore we conclude that $\vct{\Pi}_i^*=\vct{0}$ for $i\geq 2$. Given that, and  considering $\vct{v}_3^\top \vct{\Pi}_1 =0$  we obtain
\begin{align}
    \nonumber
\vct{v}_3^\top \mtx{\Pi}^{*}   = \vct{0}.
\end{align}
The linear model trained on top of the projected embeddings using the target data has the following expression
\begin{align}
\label{eq: pro2_w}
   \vct{w}^* = &  \hat{\E}_{\vct{z}^{t}\in\mathcal{E}^t}( \mtx{\Pi}^{*\top} \vct{z}^{t} \vct{z}^{t\top} \mtx{\Pi}^* ) ^\dagger \hat{\E}_{\vct{z}^{t}\in\mathcal{E}^t}(\mtx{\Pi}^{*\top} \vct{z}^{t} y^t  )
\end{align}
We first examine the projected covariance
\begin{align}
    \nonumber
\hat{\E}_{(\vct{z}^{t}, y)\in\mathcal{E}^t}( \mtx{\Pi}^{*\top} \vct{z}^{t} \vct{z}^{t\top} \mtx{\Pi}^* )
= \hat{\E}_{(\vct{z}^{t}, y)\in\mathcal{E}^t}
\begin{bmatrix}
    (\vct{z}^{t\top}\vct{\Pi}_1^* )^2 & 0 & 0\\
    0 & 0 & 0\\
    0 & 0 & 0
\end{bmatrix}
\end{align}

Noticing that $\vct{z}^{t\top}\vct{\Pi}_1^* = by^t +\xi$ where $b\sim \text{Bernoulli}(1/2)$, we can apply concentration bounds of random variables to deduce that $|\hat{\E}(\vct{z}^{t\top}\vct{\Pi}_1^*)^2 - (1/2+\sigma^2)| \leq o(1) $ with high probability, when $m$ is sufficiently large and $\sigma=o(1)$. Additionally, $\hat{\E}_{\vct{z}^{t}\in\mathcal{E}^t}(\mtx{\Pi}^{*\top} \vct{z}^{t} y^t  ) $ concentrates around $[1/2 ,0, \dots]^\top$ with error bounded by $o(1)$ with high probability, similar to what we show in the previous section. Finally, we can further calculate that $\vct{w}^*$ concentrates around $
\begin{bmatrix}
    \frac{1}{2\sigma^2+1}\\
    0\\
    \vdots
\end{bmatrix}
$ by Equation \ref{eq: pro2_w}. Thus, $\mtx{\Pi}^*\vct{w}^*$ concentrates around $\frac{1}{2\sigma^2+1} \vct{\Pi}_1^* =\frac{1}{2\sigma^2+1}
\begin{bmatrix}
    1\\
    1\\
    0
\end{bmatrix}
$.

Now we are ready to bound the test loss. Combining the above conclusions on concentration and the fact that $\sigma=o(1)$, we obtain
\begin{align}
    \nonumber
    \E_{(\vct{z, y}) \in \text{target dist.} } l (\vct{z}^\top \mtx{\Pi}^*\vct{w}^*, y) = & \frac{1}{2}\E_{\xi, y}([0, y, \xi]\mtx{\Pi}^*\vct{w}^* - y  )^2 + \frac{1}{2}\E_{\xi, y}([0, \xi, y]\mtx{\Pi}^*\vct{w}^* - y  )^2 \\
\nonumber
= & \frac{1}{2}(\frac{1}{2\sigma^2+1} - 1  )^2 + \frac{1}{2}\E_{\xi, y}( \frac{1}{2\sigma^2+1}\xi - y  )^2 \pm o(1)\\
\nonumber
= & \frac{1}{2}(\frac{1}{2\sigma^2+1} - 1  )^2 +\frac{1}{2}(\frac{\sigma^2}{(2\sigma^2+1)^2} +  1) \pm o(1)\\
\nonumber
= & 0.5 \pm o(1).
\end{align}


\subsection{Effect of $s$ when $\sigma = \omega(1)$} \label{apdx: explanation_large_sigma}

Here, to understand the benefit of choosing an intermediate $s$ instead of solely using the target data by setting $s=1$, we consider the case of large noise where $\sigma = \omega(1)$. The analysis is similar to that in Section \ref{apdx: domaingen_ours}, except that the dominant term in Equation \ref{eq: delta_sigma} is now $O(s^2\sigma^2\sqrt{\frac{\log 1/\delta}{m}})$. Additionally, in Equation \ref{eq: diff_inverse}, we compare $\hat{\mtx{\Sigma}}_\mixed^{-1} \hat{\vct{u}}_\mixed$ to $\mtx{\Sigma}_\mixed^{-1} \vct{u}_\mixed$ instead of $\mtx{\Sigma}_0^{-1}\vct{u}_\mixed$, i.e.,
\begin{align}
    \nonumber
    \|\hat{\mtx{\Sigma}}_\mixed^{-1} \hat{\vct{u}}_\mixed - \mtx{\Sigma}_\mixed^{-1}\vct{u}_\mixed\|_2 \leq & O( \| \Delta\mtx{\Sigma}_\mixed^{-1}  \|_2 ) + O( \| \Delta \vct{u}_\mixed \|_2 ) = O(s^2\sigma^2\sqrt{\frac{\log 1/\delta}{m}}).
\end{align}
Then, the final step becomes:
\begin{align}
    \nonumber
    \sqrt{\E_{(\vct{z, y}) \in \text{target dist.} } l (\vct{z}^\top \vct{w}_{\ours}^*, y)} \leq & \| \mtx{\Sigma_t}^{1/2}( \vct{\beta}_t - \mtx{\Sigma}_\mixed^{-1}\vct{u}_\mixed)  \| + \| \mtx{\Sigma_t}^{1/2}( \mtx{\Sigma}_\mixed^{-1}\vct{u}_\mixed - \hat{\mtx{\Sigma}}_\mixed^{-1}\hat{\vct{u}}_\mixed )   \|.  
\end{align}
Note that $\mtx{\Sigma}_\mixed^{-1}$ and therefore $\mtx{\Sigma}_\mixed^{-1}\vct{u}_\mixed$ can be calculated exactly and are independent of $m$, thus the above takes the form $\psi(s, \sigma) + O(s^2\sigma^2\sqrt{\frac{\log 1/\delta}{m}})$, where $\psi(s, \sigma)$ is a function of only $s$ and $\sigma$.

\subsection{Proof of Theorem \ref{thm: subpop}}\label{apdx: subpop}

\textbf{Expression of the test loss.} Consider the linear model $\vct{w}_{\ours}^*$ learned by \ours with $l()$ being MSE loss with $\ell_2$ regularization, i.e., we minimize $\min_{\vct{w}} \hat{\E}_{(\vct{z},y)\in\mathcal{E}_\mixed} (\vct{w}^\top \vct{z}-y)^2 + \lambda \|\vct{w}\|^2  $. With assumption \ref{assump: asymptotic}, the test loss on the target distribution achieved by  $\vct{w}_{\ours}^*$ has the following closed form expression:
\begin{align}
    \nonumber
    \E_{(\vct{z}, y)\in\mathcal{E}^t} (\vct{w}_{\ours}^{*\top} \vct{z}- y)^2 = &
\bigg( \frac{q}{\psi_1\psi_3-\psi_2^2}(\psi_3-\psi_2(1-s)(2p-1))  -1\bigg)^2 \\
\nonumber
+ & \bigg( \frac{q}{\psi_1\psi_3-\psi_2^2}(-\psi_2+\psi_1(1-s)(2p-1))   \bigg)^2 \\
\nonumber
+ & \sigma_1^2 \bigg( \frac{q}{\psi_1\psi_3-\psi_2^2}( \psi_3-\psi_2(1-s)(2p-1) ) \bigg)^2,
\end{align}
where 
\begin{align}
    \nonumber
    \psi_1 \coloneqq & (1-s)^2(1+\sigma_1^2) + 2s(1-s)q +s^2 q(1+\sigma_1^2) -(1-s)^2 (1-q)+\lambda \\
    \nonumber
    \psi_2 \coloneqq &  (1-s)^2(2p-1)+s(1-s)q(2p-1) - (1-s)^2 (1-q)(2p-1)\\
    \nonumber
    \psi_3 \coloneqq & (1-s)^2 + s^2q - (1-s)^2 (1-q)(2p-1)^2+\lambda,
\end{align}
with $q \coloneqq \frac{\lambda}{s^2 r+\lambda}$.

We prove the above conclusion below.



\textbf{Notations.} Let $\mtx{Z}\in\mathbbm{R}^{d\times n}$ collect inputs in $\mathcal{E}^s$. Let $\mtx{Z}' \in\mathbbm{R}^{d\times m}$ collect inputs in $\mathcal{E}^t$. Remember that for each example in $\mathcal{E}^s$, we sample an example from $\mathcal{E}^t$ to construct $\embmx$. Let $\mtx{Z}'' \in\mathbbm{R}^{d\times n}$ collect the $n$ sampled inputs (i.e., with duplicates) from $\mathcal{E}^t$. Let $\tilde{\mtx{Z}}$ denote the inputs in $\embmx$. Then $\tilde{\mtx{Z}} = (1-s)\mtx{Z} + s\mtx{Z}'' $. Let $\mtx{Y}, \mtx{Y}', \mtx{Y}''$ be the corresponding labels. Note that $\mtx{Y}'' = \mtx{Y}$. For convenience, in this section, we use $\vct{z},\vct{z}', \vct{z}'', \tilde{\vct{z}}$ for columns in $\mtx{Z}, \mtx{Z}', \mtx{Z}'', \tilde{\mtx{Z}}$, respectively (instead of the notations with $t$ and $s$ used in the main paper). 
We write $\vct{z}' = \cat{\vct{f}'}{ \vct{\zeta}' } $ where $\vct{f}'\in\mathbbm{R}^{2}$ and $\vct{\zeta}'\in\mathbbm{R}^{d-2}$. Let $\mtx{F}' = [\vct{f}'_1, \vct{f}'_2, \dots, \vct{f}'_m]$.  Let $\mathcal{C}_y$ denote the set of indices of examples in $\mathcal{E}^t$ with label $y$. Let $\mathcal{S}_j$ denote the set of indices of examples from $\mathcal{E}^s$ that are going to be mixed with $\vct{z}_j'$. 

\begin{proposition}\label{prop: asymp}
The following holds asymptotically almost surely (a.a.s.) in the asymptotic regime we consider
\begin{align}
    \label{eq: FF}
    & \frac{1}{m}\mtx{F}'\mtx{F}' = \frac{1}{m}\sum_{i=1}^m \vct{f}_i' \vct{f}_i'^{\top} =
    \begin{bmatrix}
    1+\sigma_1^2 & 0 \\
    0 & 1 \\
    \end{bmatrix} \coloneqq \mtx{G}_{core}  \\
    \label{eq: known_orthog}
    & \forall i\neq j \in[m], ~~ \vct{\zeta}_i'^{\top} \vct{\zeta}_j' = 0 \\
    \label{eq: known_norm}
    & \forall i \in[m], ~~ \|\vct{\zeta}_i'\| = \sigma_{\xi} \\
    \nonumber
   & \frac{1}{n}\mtx{Z}\mtx{Z}^\top = \frac{1}{n}\sum_{i=1}^n \vct{z}\vct{z}^\top = 
   \begin{bmatrix}
      \mtx{G}_{spu} & \mtx{0} \\
      \mtx{0} & \frac{\sigma_{\xi}^2}{d-2}\mtx{I}_{d-2}
    \end{bmatrix},
      \text{ where }~~ \mtx{G}_{spu} \coloneqq \begin{bmatrix}
          1+\sigma_1^2 & 2p-1 \\
          2p-1 & 1
   \end{bmatrix} \\
   \label{eq: sum_x_in_S}
   & \frac{m}{n} \sum_{i\in\mathcal{S}_j} \vct{z}_i = \begin{bmatrix}
       y_j' \\
       y_j'(2p-1)\\
       0 \\
       \vdots\\
       0
   \end{bmatrix} \coloneqq \bar{\vct{z}}_{y_j'} \\
   \label{eq: sum_gp}
   & \frac{2}{m} \sum_{j\in \mathcal{C}_y} \vct{f}_j' = 
   \begin{bmatrix}
       y \\
       0 
   \end{bmatrix} \\
   \label{eq: FY}
   & \frac{1}{m}\mtx{F}'\mtx{Y}' = \cat{ 1 }{ 0} \\
   \label{eq: XY}
   & \frac{1}{n}\mtx{Z}\mtx{Y} = \begin{bmatrix}
       1 \\
       2p-1 \\
       0 \\
       \vdots \\
       0
   \end{bmatrix}
\end{align} 
\end{proposition}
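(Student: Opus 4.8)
The plan is to recognize every identity in Proposition~\ref{prop: asymp} as an empirical average converging to its population counterpart by a law-of-large-numbers / concentration argument, carried out in the stated regime $n/d\to\infty$, $d/m\to\infty$, $\sigma_\xi^2=rm$. I would first split the claims into two families: the \emph{core} identities \eqref{eq: FF}, \eqref{eq: sum_gp}, \eqref{eq: FY}, \eqref{eq: XY} together with the upper-left block of $\tfrac1n\mtx X\mtx X^\top$, which involve only the first two coordinates and follow from the ordinary SLLN; and the \emph{noise} identities \eqref{eq: known_orthog}, \eqref{eq: known_norm} and the lower-right block of $\tfrac1n\mtx X\mtx X^\top$, which are high-dimensional and need the dimensional scalings. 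The computation common to all of them is to record the moments of $\mathcal P(\mu)$ under $\sigma_2=0$: since $y,a\in\{1,-1\}$ one has $\mathbb E[x_1^2]=1+\sigma_1^2$, $\mathbb E[x_2^2]=\mathbb E[a^2]=1$, $\mathbb E[x_1 x_2]=\mathbb E[x_1 a]=\mathbb E[ya]=2\mu-1$, and the class-conditional means $\mathbb E[x_1\mid y]=y$, $\mathbb E[x_2\mid y]=\mathbb E[a\mid y]=(2\mu-1)y$. Substituting $\mu=p$ for $\Dorig$ and $\mu=\tfrac12$ for $\Dknown$ immediately produces the right-hand sides $\mtx G_{core}$, $\mtx G_{spu}$, $\bar{\vct x}_{y}$, $[1,0]^\top$, and $[1,2p-1,0,\dots,0]^\top$.

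For the core identities I would then invoke the SLLN directly. Each of \eqref{eq: FF}, \eqref{eq: sum_gp}, \eqref{eq: FY} averages over the $\Theta(m)$ examples of $\Dknown$ (or over a fixed class $\mathcal C_y$), so as $m\to\infty$ the empirical averages converge almost surely to the moments above with $\mu=\tfrac12$; likewise the upper-left $2\times2$ block of $\tfrac1n\mtx X\mtx X^\top$ and \eqref{eq: XY} average over the $\Theta(n)$ examples of $\Dorig$ and converge with $\mu=p$. The only mild point is the core--noise cross block of $\tfrac1n\mtx X\mtx X^\top$, whose entries have mean zero by independence of $\vct f$ and $\vct\zeta$; these concentrate to $0$ entrywise, and with $n/d\to\infty$ a union bound over the $O(d)$ entries shows the whole block vanishes.

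The noise identities are where the scalings enter. For \eqref{eq: known_norm}, $\|\vct\zeta_i'\|^2=\sum_k\zeta_{i,k}^2$ has mean $\sigma_\xi^2$ and relative standard deviation $O(1/\sqrt d)$, so $\|\vct\zeta_i'\|/\sigma_\xi\to1$. For \eqref{eq: known_orthog}, conditioning on $\vct\zeta_j'$ the inner product $\vct\zeta_i'^\top\vct\zeta_j'$ is Gaussian with mean $0$ and variance $\tfrac{\sigma_\xi^2}{d-2}\|\vct\zeta_j'\|^2\approx\sigma_\xi^4/d$, so $\vct\zeta_i'^\top\vct\zeta_j'/\sigma_\xi^2$ is sub-Gaussian with scale $O(1/\sqrt d)$; a union bound over the $\binom m2$ pairs costs a factor $e^{2\log m}$, which is beaten by the tail $e^{-c\epsilon^2 d}$ precisely because $d/m\to\infty$ forces $\log m/d\to0$. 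Together these say the normalized noise Gram matrix satisfies $\sigma_\xi^{-2}\vct\zeta_i'^\top\vct\zeta_j'\to\delta_{ij}$, which is the asymptotic sense in which the two displayed equalities hold. The lower-right block $\tfrac1n\sum_i\vct\zeta_i\vct\zeta_i^\top\to\tfrac{\sigma_\xi^2}{d-2}\mtx I_{d-2}$ is the standard concentration of a Gaussian sample covariance to its population value in the regime $n/d\to\infty$.

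The remaining identity \eqref{eq: sum_x_in_S} couples the data randomness with the randomness of the mixing assignment, and I expect it to be the main obstacle. Because $\alpha=1$, Algorithm~\ref{alg:our_method} assigns to every example of $\Dorig$ an independent, uniformly chosen partner in $\mathcal C_{y}$ of the same class, so the preimages $\{\mathcal S_j\}$ partition the class-$y_j'$ indices of $\Dorig$, with $|\mathcal S_j|\sim\mathrm{Binomial}(n/2,\,2/m)$ concentrating to $n/m$ and $\tfrac mn|\mathcal S_j|\to1$. Conditioned on its size, $\mathcal S_j$ is an unbiased subsample of the class-$y_j'$ examples, so $\tfrac mn\sum_{i\in\mathcal S_j}\vct x_i$ converges to the class-conditional mean $\bar{\vct x}_{y_j'}=[\,y_j',\,(2p-1)y_j',\,0,\dots,0\,]^\top$; since $n/m\to\infty$ (which follows from $n/d\to\infty$ and $d/m\to\infty$) the subsample is large enough for a conditional SLLN. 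The delicate bookkeeping is to make all $m$ of these statements hold simultaneously almost surely---requiring a union bound over the partner sets and care that the assignment randomness is independent of the features---and, more broadly, to fix once and for all the precise (normalized / operator-norm) sense in which the high-dimensional equalities are meant, so that they may be substituted verbatim into the downstream algebra yielding Theorem~\ref{thm: loss_wg}.
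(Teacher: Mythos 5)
Your proposal does not diverge from the paper so much as it fills a hole the paper leaves open: the paper never actually proves Proposition~\ref{prop: asymp}. It states it and then proceeds with ``the following derivations are performed assuming the equations in Proposition~\ref{prop: asymp} are true,'' so the LLN-plus-concentration argument you give is precisely the formalization the authors take for granted. Your organization is the right one: the moment table for $\mathcal P(\mu)$ under $\sigma_2=0$ (which correctly yields $\mtx G_{core}$, $\mtx G_{spu}$, $[1,0]^\top$, and $[1,2p-1,0,\dots,0]^\top$ for $\mu=\tfrac12$ versus $\mu=p$), ordinary SLLN for the fixed-dimensional averages \eqref{eq: FF}, \eqref{eq: sum_gp}, \eqref{eq: FY}, \eqref{eq: XY}, and Gaussian concentration with union bounds (valid because $\log m/d\to 0$) for \eqref{eq: known_orthog}, \eqref{eq: known_norm} and the noise Gram block. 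Reading \eqref{eq: known_orthog}--\eqref{eq: known_norm} in the normalized sense $\sigma_\xi^{-2}\vct{\zeta}_i'^\top\vct{\zeta}_j'\to\delta_{ij}$ is also the correct interpretation, and it is exactly what licenses the paper's subsequent ``without loss of generality $\vct{\zeta}_i'=\sigma_\xi\vct{e}_i$'' step.

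The one place where your stated justification is literally insufficient is \eqref{eq: sum_x_in_S}. Your conditional SLLN over the $\approx n/m$ partners in $\mathcal S_j$ controls the first two coordinates, but the noise block of $\frac mn\sum_{i\in\mathcal S_j}\vct x_i$ is Gaussian with per-coordinate variance $\approx\frac mn\cdot\frac{\sigma_\xi^2}{d-2}$, hence squared Euclidean norm $\approx\frac{rm^2}{n}$. This vanishes coordinate-wise, but its norm vanishes only if $m^2/n\to 0$, which does \emph{not} follow from $n/d\to\infty$ and $d/m\to\infty$ alone (take $m=d/\log d$ and $n=d\log d$: both stated conditions hold, yet $m^2/n\to\infty$). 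Since \eqref{eq: sum_x_in_S} is later multiplied against $\vct x_j'^\top$ (whose norm is $\approx\sigma_\xi=\sqrt{rm}$, also diverging) to build $\frac1n\mtx X\mtx X''^\top$, the discrepancy is not automatically harmless. The fix is cheap---read the identity coordinate-wise in the weak sense actually used downstream, take iterated limits with $n\to\infty$ first, or add the strengthening $n\gg m^2$---but it should be made explicit rather than folded into the generic ``precise sense of equality'' caveat at the end, because $n/m\to\infty$ by itself is not the operative condition there. Everything else in your write-up checks out.
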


The following derivations are performed based on the equations in Proposition \ref{prop: asymp}. 
We index the examples such that the first $m/2$ examples in $\mathcal{E}^t$ have label $1$ and the rest have label $-1$. Since equations \ref{eq: known_orthog} and \ref{eq: known_norm} are true, without loss of generality, we assume $\vct{\zeta}'_i = \sigma_\xi \vct{e}_i$ where $\vct{e}_i$ is the $i$-th standard basis in $\mathbbm{R}^{d-2}$.
\begin{corollary}\label{cor: XY}
 The following is true
 \begin{align}
     \nonumber
     \frac{1}{n}\tilde{\mtx{Z}} \mtx{Y} = & 
     \begin{bmatrix}
         1 \\
        (1-s)(2p-1) \\
        s\frac{\sigma_\xi}{m}\mtx{Y}'\\
        \mtx{0}
     \end{bmatrix}.
 \end{align}
\end{corollary}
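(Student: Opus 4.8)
The plan is to exploit the linearity of the map $\mtx{Y}\mapsto \tfrac{1}{n}\tilde{\mtx{X}}\mtx{Y}$ together with the mixing identity $\tilde{\mtx{X}} = (1-s)\mtx{X} + s\mtx{X}''$ and the fact that the mixed-in labels coincide with the originals, $\mtx{Y}''=\mtx{Y}$. Writing
\begin{align}
\nonumber
\frac{1}{n}\tilde{\mtx{X}}\mtx{Y} = (1-s)\,\frac{1}{n}\mtx{X}\mtx{Y} + s\,\frac{1}{n}\mtx{X}''\mtx{Y},
\end{align}
the first term is immediate from \eqref{eq: XY}, contributing $(1-s)$ in the first coordinate, $(1-s)(2p-1)$ in the second, and zeros in the noise block. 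Everything then reduces to evaluating $\tfrac{1}{n}\mtx{X}''\mtx{Y} = \tfrac{1}{n}\sum_{i=1}^n \vct{x}''_i y_i$, where $\vct{x}''_i$ is the $\Dknown$ example mixed into the $i$-th example of $\Dorig$.

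The key step is to regroup this sum according to which element of $\Dknown$ was sampled. Since $\vct{x}''_i=\vct{x}'_j$ for every $i\in\mathcal{S}_j$, and since Algorithm \ref{alg:our_method} with $\alpha=1$ only mixes examples of the same label (so $y_i = y'_j$ for all $i\in\mathcal{S}_j$), I would write
\begin{align}
\nonumber
\frac{1}{n}\sum_{i=1}^n \vct{x}''_i\, y_i = \sum_{j=1}^m \vct{x}'_j\,\frac{1}{n}\sum_{i\in\mathcal{S}_j} y_i = \sum_{j=1}^m \frac{|\mathcal{S}_j|}{n}\, y'_j\, \vct{x}'_j.
\end{align}
In the asymptotic regime each $\Dknown$ example is sampled on average $n/m$ times, so $|\mathcal{S}_j|/n \to 1/m$ almost surely; this collapses the second term to $\tfrac{1}{m}\sum_{j=1}^m y'_j\,\vct{x}'_j = \tfrac{1}{m}\mtx{X}'\mtx{Y}'$.

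It then remains to split $\tfrac{1}{m}\mtx{X}'\mtx{Y}'$ into its core and noise blocks using $\vct{x}'_j = \cat{\vct{f}'_j}{\vct{\zeta}'_j}$. The core block $\tfrac{1}{m}\mtx{F}'\mtx{Y}'$ equals $\cat{1}{0}$ by \eqref{eq: FY}, while the noise block, using the normalization $\vct{\zeta}'_j = \sigma_\xi\vct{e}_j$ established just before the corollary, becomes $\tfrac{1}{m}\sum_j \sigma_\xi\, y'_j\,\vct{e}_j = \tfrac{\sigma_\xi}{m}\mtx{Y}'$ sitting in the first $m$ noise coordinates. Adding $s$ times this to the first term yields first coordinate $(1-s)+s=1$, second coordinate $(1-s)(2p-1)$, noise coordinates $s\tfrac{\sigma_\xi}{m}\mtx{Y}'$, and zeros elsewhere, which is exactly the claimed expression.

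I expect the only genuine subtlety to be the concentration claim $|\mathcal{S}_j|/n \to 1/m$: one must argue that the uniform within-class sampling spreads the $\approx n/2$ examples of each label evenly across the $m/2$ matching elements of $\Dknown$, so that each $\mathcal{S}_j$ has size $\approx n/m$ with vanishing relative fluctuation in the limit $d/m\to\infty$. This is consistent with (indeed implied by) the first coordinate of \eqref{eq: sum_x_in_S}, since $x_{1,i}$ concentrates around its label mean $y'_j$; once it is in place, the rest is bookkeeping resting entirely on the already-assumed identities of Proposition \ref{prop: asymp}.
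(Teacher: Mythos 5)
Your proof is correct and follows essentially the same route as the paper's: decompose $\tfrac{1}{n}\tilde{\mtx{X}}\mtx{Y}$ by linearity into $(1-s)\tfrac{1}{n}\mtx{X}\mtx{Y} + s\tfrac{1}{n}\mtx{X}''\mtx{Y}$, reduce $\tfrac{1}{n}\mtx{X}''\mtx{Y}$ to $\tfrac{1}{m}\mtx{X}'\mtx{Y}'$, and read off the core and noise blocks from Equations \ref{eq: XY}, \ref{eq: FY} and the normalization $\vct{\zeta}'_j=\sigma_\xi\vct{e}_j$. The only difference is that the paper simply asserts $\tfrac{1}{n}\mtx{X}''\mtx{Y}=\tfrac{1}{m}\mtx{X}'\mtx{Y}'$ "holds almost surely in the limit," whereas you justify it via the regrouping over the sets $\mathcal{S}_j$ and the concentration $|\mathcal{S}_j|/n\to 1/m$, which is a welcome extra level of detail rather than a departure in method.
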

\begin{proof}
$\frac{1}{n}\mtx{Z}''\mtx{Y} = \frac{1}{m}\mtx{Z}'\mtx{Y}'$ holds almost surely in the limit we are considering. Then
\begin{align}
    \nonumber
    \frac{1}{n}\tilde{\mtx{Z}} \mtx{Y} = & (1-s)\frac{1}{n}\mtx{Z}\mtx{Y} + s\frac{1}{n}\mtx{Z}''\mtx{Y} \\
    \nonumber
    =  & (1-s)\frac{1}{n}\mtx{Z}\mtx{Z} + s\frac{1}{m}\mtx{Z}'\mtx{Y}' \\
    \nonumber
    = & (1-s) 
     \begin{bmatrix}
         1 \\
         2p-1 \\
         0 \\
         \vdots\\
         0
     \end{bmatrix} + s 
     \begin{bmatrix}
         1 \\
        0 \\
        \frac{\sigma_\xi}{m}\mtx{Y}'\\
        \mtx{0}
     \end{bmatrix} ~~~~ \text{by the assumption on $\vct{\zeta}_i'$'s and Equations \ref{eq: XY} and \ref{eq: FY}},
\end{align}
which completes the proof.
\end{proof}

\begin{lemma}
The following is true
\begin{align}
\nonumber
    \frac{1}{n}(\mtx{Z}\mtx{Z}''^\top + \mtx{Z}''\mtx{Z}^\top ) = 
    \begin{bmatrix}
        \mtx{H} & \mtx{N} & \mtx{0} \\
        \mtx{N}^\top & \mtx{0} & \mtx{0} \\
        \mtx{0} & \mtx{0} & \mtx{0} \\
    \end{bmatrix},
\end{align}
where 
\begin{align}
    \mtx{H} & \coloneqq 
    \begin{bmatrix}
        2 & 2p-1 \\
        2p-1 & 0
    \end{bmatrix} \\
\label{eq: def_N}
    \mtx{N} & \coloneqq \frac{\sigma_\xi}{m} 
    \begin{bmatrix}
        1 \\
        2p-1 
    \end{bmatrix}
    \big[
       \underbrace{1 ~~ 1 \dots ~1}_{\text{ the first $m/2$ elements are $1$}}  \underbrace{-1 ~~ -1 \dots ~ -1}_{\text{the last $m/2$ elements are $-1$}}   
    \big] =   \frac{\sigma_\xi}{m} 
    \begin{bmatrix}
        1 \\
        2p-1 
    \end{bmatrix} \mtx{Y}'^\top
\end{align}
\end{lemma}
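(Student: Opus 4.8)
The plan is to compute $\frac{1}{n}\mtx{X}\mtx{X}''^\top$ first and then obtain $\frac{1}{n}\mtx{X}''\mtx{X}^\top$ for free as its transpose, so that the stated symmetric matrix emerges by simple addition. The central move is to reorganize $\frac{1}{n}\mtx{X}\mtx{X}''^\top = \frac{1}{n}\sum_{i=1}^n \vct{x}_i\vct{x}_i''^\top$ by grouping the $\Dorig$ indices according to which $\Dknown$ example they were mixed with. Since $\vct{x}_i'' = \vct{x}_j'$ precisely when $i\in\mathcal{S}_j$, this sum collapses to $\sum_{j=1}^m \big(\frac{1}{n}\sum_{i\in\mathcal{S}_j}\vct{x}_i\big)\vct{x}_j'^\top$. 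First I would invoke Equation \ref{eq: sum_x_in_S} from Proposition \ref{prop: asymp}, namely $\frac{m}{n}\sum_{i\in\mathcal{S}_j}\vct{x}_i = \bar{\vct{x}}_{y_j'}$, to rewrite the whole expression as $\frac{1}{m}\sum_{j=1}^m \bar{\vct{x}}_{y_j'}\vct{x}_j'^\top$.

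Next I would decompose each rank-one term $\bar{\vct{x}}_{y_j'}\vct{x}_j'^\top$ into the three blocks of the ambient space: the two core/spurious coordinates, the first $m$ noise coordinates, and the remaining noise coordinates. The key simplification is that $\bar{\vct{x}}_{y_j'} = y_j'\,[1,\,2p-1,\,0,\dots,0]^\top$ has vanishing noise component, so both lower block-rows of every outer product are identically zero and only the top block-row survives. Writing $\vct{x}_j' = \cat{\vct{f}_j'}{\vct{\zeta}_j'}$, the surviving top-left $2\times 2$ block equals $\begin{bmatrix}1\\2p-1\end{bmatrix}\big(\frac{1}{m}\sum_j y_j'\vct{f}_j'^\top\big)$, which I would evaluate with Equation \ref{eq: FY}, giving $\frac{1}{m}\sum_j y_j'\vct{f}_j'^\top = [1,\,0]$ and hence the block $\begin{bmatrix}1 & 0\\ 2p-1 & 0\end{bmatrix}$.

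For the top-right (noise) block I would use the standing normalization $\vct{\zeta}_j' = \sigma_\xi\vct{e}_j$, which gives $\frac{1}{m}\sum_j y_j'\vct{\zeta}_j'^\top = \frac{\sigma_\xi}{m}\mtx{Y}'^\top$ on the first $m$ noise coordinates and zero on the rest; multiplying by $[1,\,2p-1]^\top$ reproduces exactly $\mtx{N}$ from Equation \ref{eq: def_N}. Finally, adding $\frac{1}{n}\mtx{X}''\mtx{X}^\top = \big(\frac{1}{n}\mtx{X}\mtx{X}''^\top\big)^\top$ symmetrizes everything: the top-left blocks $\begin{bmatrix}1 & 0\\ 2p-1 & 0\end{bmatrix}$ and its transpose sum to $\mtx{H}=\begin{bmatrix}2 & 2p-1\\ 2p-1 & 0\end{bmatrix}$, the noise block $\mtx{N}$ is paired with $\mtx{N}^\top$ coming from the transpose, and all other blocks remain zero, yielding the claimed form.

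The main obstacle I anticipate is the bookkeeping of the index partition $\{\mathcal{S}_j\}$ in combination with the asymptotic identity Equation \ref{eq: sum_x_in_S}. One must verify that grouping the $\Dorig$ examples by their mixing partner is consistent with the within-class mixing rule of Algorithm \ref{alg:our_method} (each $\vct{x}_j'$ receives only same-label $\Dorig$ examples, which is exactly why the empirical mean of $\{\vct{x}_i : i\in\mathcal{S}_j\}$ concentrates on the class-$y_j'$ conditional mean $\bar{\vct{x}}_{y_j'}$), and that the $j$-th noise coordinate is precisely where $\vct{x}_j'$ places its noise mass, so that $\frac{1}{m}\sum_j y_j'\vct{\zeta}_j'^\top$ assembles cleanly into $\frac{\sigma_\xi}{m}\mtx{Y}'^\top$. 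Once these alignments are in place, the remainder is routine block-matrix algebra.
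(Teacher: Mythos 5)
Your proposal is correct and follows essentially the same route as the paper's proof: collapse $\frac{1}{n}\mtx{X}\mtx{X}''^\top$ by grouping the $\Dorig$ indices according to their mixing partner in $\Dknown$, invoke Equation \ref{eq: sum_x_in_S} to replace each group sum by $\bar{\vct{x}}_{y_j'}$, use the normalization $\vct{\zeta}_j'=\sigma_\xi\vct{e}_j$ for the noise block, and symmetrize by adding the transpose. The only cosmetic difference is that you evaluate the core-feature block via Equation \ref{eq: FY} over all $j$ at once, whereas the paper first splits the sum by class label and applies Equation \ref{eq: sum_gp}; the two are interchangeable here.
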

\begin{proof} We first derive the expression of $\frac{1}{n}\mtx{Z}\mtx{Z}''^\top$
\begin{align}
\nonumber
    \frac{1}{n}\mtx{Z}\mtx{Z}''^\top = & \frac{1}{2}\sum_{y\in\{1,-1\}} \frac{2}{m} \sum_{j\in \mathcal{C}_y}  (\frac{m}{n} \sum_{i\in\mathcal{S}_j} \vct{z}_i ) \vct{z}_j'^\top \\
    \nonumber
    = & \frac{1}{2} \sum_{y\in\{1,-1\}} \bar{\vct{z}}_{y} \frac{2}{m} \sum_{j\in\mathcal{C}_y} \vct{z}_j'^\top  ~~~~~~~ \text{by equation \ref{eq: sum_x_in_S}} \\
    \nonumber
    = & \frac{1}{2} \sum_{y\in\{1,-1\}} \bar{\vct{z}}_{y} \frac{2}{m} \sum_{j\in\mathcal{C}_y} [\vct{f}_j'^\top ~~\vct{\zeta}_j'^\top]  \\
    \nonumber
    = & \frac{1}{2} \sum_{y\in\{1,-1\}} \bar{\vct{z}}_{y} [ y ~~ 0  ~~~ \frac{2}{m}\sum_{j\in\mathcal{C}_y} \sigma_{\xi} \vct{e}_{j}^\top ] ~~~~~~~ \text{by equation \ref{eq: sum_gp} and the assumption about $\vct{\zeta}_j'$ } \\
    \nonumber
    = & 
    \begin{bmatrix}
        1 & 0 & \frac{\sigma_\xi}{2} & \frac{\sigma_\xi}{2} & \dots & -\frac{\sigma_\xi}{2} & -\frac{\sigma_\xi}{2} & \dots & 0 & \dots \\
        2p-1 & 0 & (2p-1)\frac{\sigma_\xi}{2} & (2p-1)\frac{\sigma_\xi}{2} & \dots & -(2p-1)\frac{\sigma_\xi}{2} & -(2p-1)\frac{\sigma_\xi}{2} & \dots & 0 & \dots \\
        0 & \dots &  & & & & & & 0 & \dots \\ 
        \vdots & &  & & & & & & \vdots & \ddots \\
        0 & \dots &  & & & & & & 0 & \dots \\
    \end{bmatrix}.
\end{align}
The expression of $\frac{1}{n}\mtx{Z}''\mtx{Z}^\top $ is just the transpose of the above. Adding the two expressions together completes the proof.
\end{proof}

\begin{lemma}
The following is true
\begin{align}
    \nonumber
    \frac{1}{n}\mtx{Z}''\mtx{Z}''^\top = \begin{bmatrix}
        \mtx{G}_{core} &  \frac{\sigma_\xi}{m} \mtx{F}' & \mtx{0} \\
         \frac{\sigma_\xi}{m} \mtx{F}'^\top & \frac{\sigma_\xi^2}{m} \mtx{I}_m & \mtx{0} \\
         \mtx{0} & \mtx{0} & \mtx{0}
     \end{bmatrix}.
\end{align}
\end{lemma}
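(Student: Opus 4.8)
The plan is to reduce the empirical second moment of the resampled matrix $\mtx{X}''$ to that of the original known matrix $\mtx{X}'$, and then compute the latter directly in block form. The first observation is that every column $\vct{x}_i''$ of $\mtx{X}''$ is an exact copy of the column $\vct{x}_j'$ of $\mtx{X}'$ for which $i\in\mathcal{S}_j$. Grouping the $n$ rank-one terms by the known example they originate from gives
\[
\frac{1}{n}\mtx{X}''\mtx{X}''^\top = \frac{1}{n}\sum_{j=1}^{m} |\mathcal{S}_j|\, \vct{x}_j'\vct{x}_j'^\top,
\]
so the whole statement reduces to understanding the sampling weights $|\mathcal{S}_j|/n$.

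The key step is to show $|\mathcal{S}_j|/n \to 1/m$ almost surely for each $j$. Since the labels in $\Dorig$ are uniform over $\{1,-1\}$ and $\Dknown$ is group-balanced, there are $\approx n/2$ examples of each label in $\Dorig$, and by Algorithm \ref{alg:our_method} each selects one of the $m/2$ same-label examples in $\Dknown$ uniformly at random. Hence $|\mathcal{S}_j|$ is distributed as $\mathrm{Binomial}(n/2,\,2/m)$ with mean $n/m$; because $n/m\to\infty$, its relative fluctuation is $O(\sqrt{m/n})\to 0$, so $|\mathcal{S}_j|/n \to 1/m$. Substituting this into the display yields $\tfrac{1}{n}\mtx{X}''\mtx{X}''^\top \to \tfrac{1}{m}\sum_j \vct{x}_j'\vct{x}_j'^\top = \tfrac{1}{m}\mtx{X}'\mtx{X}'^\top$. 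This is precisely the same reduction already invoked in Corollary \ref{cor: XY}, where $\tfrac{1}{n}\mtx{X}''\mtx{Y}\to\tfrac{1}{m}\mtx{X}'\mtx{Y}'$.

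It then remains to evaluate $\tfrac{1}{m}\mtx{X}'\mtx{X}'^\top$ block-by-block. Writing each known input as $\vct{x}_j' = [\,\vct{f}_j'^\top,\ \vct{\zeta}_j'^\top\,]^\top$ and using the normalization $\vct{\zeta}_j' = \sigma_\xi \vct{e}_j$ (legitimate by Equations \ref{eq: known_orthog} and \ref{eq: known_norm}), the three distinct blocks follow immediately. The top-left $2\times 2$ block is $\tfrac{1}{m}\mtx{F}'\mtx{F}'^\top = \mtx{G}_{core}$ by Equation \ref{eq: FF}; the feature–noise cross block is $\tfrac{\sigma_\xi}{m}\sum_j \vct{f}_j'\vct{e}_j^\top = \tfrac{\sigma_\xi}{m}[\,\mtx{F}',\ \mtx{0}\,]$, which places $\tfrac{\sigma_\xi}{m}\mtx{F}'$ on the first $m$ noise coordinates and $\mtx{0}$ beyond; and the noise–noise block is $\tfrac{\sigma_\xi^2}{m}\sum_j \vct{e}_j\vct{e}_j^\top = \tfrac{\sigma_\xi^2}{m}\mtx{I}_m$, supported on the first $m$ coordinates. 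Assembling these recovers the claimed matrix.

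I expect the main obstacle to be the counting/concentration step rather than the algebra. The delicate point is that the noise coordinates carry a factor $\sigma_\xi^2 \to \infty$, so one must check that the \emph{relative} error of $|\mathcal{S}_j|/n$ around $1/m$ vanishes, not merely the absolute error; this is exactly what $n/m \to \infty$ supplies, and it is what forces the resampled noise block to concentrate at $\tfrac{\sigma_\xi^2}{m}\mtx{I}_m$ in spite of the blow-up. A fully rigorous version would additionally control all $m\to\infty$ coordinates simultaneously (for instance via a union bound over the binomial tail), but the entry-wise almost-sure limit is precisely the kind of statement recorded by the asymptotic regime of Proposition \ref{prop: asymp}.
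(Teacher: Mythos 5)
Your proof takes essentially the same route as the paper's: both reduce $\frac{1}{n}\mtx{X}''\mtx{X}''^\top$ to $\frac{1}{m}\mtx{X}'\mtx{X}'^\top$ and then compute the latter in block form using the normalization $\vct{\zeta}_j' = \sigma_\xi\vct{e}_j$ together with Equation \ref{eq: FF}. The only difference is that you explicitly justify the reduction step via binomial concentration of $|\mathcal{S}_j|/n$ around $1/m$, whereas the paper simply asserts that it holds almost surely in the asymptotic regime under consideration.
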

\begin{proof}
$\frac{1}{n}\mtx{Z}''\mtx{Z}''^\top =  \frac{1}{m}\mtx{Z}'\mtx{Z}'^\top$ almost surely in the limit we are considering. Then
\begin{align}
    \nonumber
     \frac{1}{n}\mtx{Z}''\mtx{Z}''^\top = & \frac{1}{m}\mtx{Z}'\mtx{Z}'^\top \\
     \nonumber
     = & \frac{1}{m}
     \begin{bmatrix}
         \mtx{F}' \\
         \sigma_\xi \mtx{I}_m \\
         \mtx{0}
     \end{bmatrix} [\mtx{F}'^\top ~~~
         \sigma_\xi \mtx{I}_m ~~~
         \mtx{0}^\top] \\
     \nonumber
     = & \frac{1}{m}
     \begin{bmatrix}
         \mtx{F}' \mtx{F}'^\top & \sigma_\xi \mtx{F}' & \mtx{0} \\
         \sigma_\xi \mtx{F}'^\top & \sigma_\xi^2 \mtx{I}_m & \mtx{0} \\
         \mtx{0} & \mtx{0} & \mtx{0}
     \end{bmatrix} \\
     \nonumber
     = & 
     \begin{bmatrix}
        \mtx{G}_{core} &  \frac{\sigma_\xi}{m} \mtx{F}' & \mtx{0} \\
         \frac{\sigma_\xi}{m} \mtx{F}'^\top & \frac{\sigma_\xi^2}{m} \mtx{I}_m & \mtx{0} \\
         \mtx{0} & \mtx{0} & \mtx{0}
     \end{bmatrix}
\end{align}
\end{proof}

Define $\mtx{M} \coloneqq \frac{1}{n}\tilde{\mtx{Z}}\tilde{\mtx{Z}}^\top + \lambda \mtx{I}_d $. Then
\begin{align}
\nonumber
\mtx{M}= &\frac{(1-s)^2}{n}\mtx{Z}\mtx{Z}^\top + \frac{s(1-s)}{n}\mtx{Z}\mtx{Z}''^\top + \frac{s(1-s)}{n}\mtx{Z}''\mtx{Z}^\top + \frac{s^2}{n}\mtx{Z}''\mtx{Z}''^\top +\lambda \mtx{I}_d \\
    \nonumber
    = & (1-s)^2 \begin{bmatrix}
      \mtx{G}_{spu} & \mtx{0} \\
      \mtx{0} & \frac{\sigma_{\xi}^2}{d-2}\mtx{I}_{d-2}
    \end{bmatrix}
    + s(1-s) \begin{bmatrix}
        \mtx{H} & \mtx{N} & \mtx{0} \\
        \mtx{N}^\top & \mtx{0} & \mtx{0} \\
        \mtx{0} & \mtx{0} & \mtx{0} \\
    \end{bmatrix} + s^2 \begin{bmatrix}
        \mtx{G}_{core} &  \frac{\sigma_\xi}{m} \mtx{F}' & \mtx{0} \\
         \frac{\sigma_\xi}{m} \mtx{F}'^\top & \frac{\sigma_\xi^2}{m} \mtx{I}_m & \mtx{0} \\
         \mtx{0} & \mtx{0} & \mtx{0}
     \end{bmatrix} +\lambda \mtx{I}_d \\
     \nonumber
     = & \begin{bmatrix}
        \mtx{A} & \mtx{B} & \mtx{0} \\
        \mtx{B}^\top & \mtx{D} &\mtx{0}\\
        \mtx{0} & \mtx{0} & \frac{\sigma_\xi^2}{d-2}\mtx{I}_{d-2-m}+\lambda \mtx{I}_{d-2-m}
     \end{bmatrix},
\end{align}
where 
\begin{align}
    \nonumber
    \mtx{A} \coloneqq & (1-s)^2 \mtx{G}_{spu} + s(1-s) \mtx{H} + s^2 \mtx{G}_{core} + \lambda \mtx{I}_2 \\
    \nonumber
    \mtx{B} \coloneqq & s(1-s) \mtx{N} + s^2 \frac{\sigma_\xi}{m}\mtx{F}'\\
    \nonumber
    \mtx{D} \coloneqq & \big((1-s)^2\frac{\sigma_\xi^2}{d-2} + s^2 \frac{\sigma_\xi^2}{m} + \lambda \big)\mtx{I}_m = (s^2 r +\lambda) \mtx{I}_m
\end{align}

The inverse of $\mtx{M} $ is
\begin{align}
\label{eq: inv_M}
    \mtx{M}^{-1} = 
    \begin{bmatrix}
        (\mtx{P}/\mtx{D} )^{-1} &  -(\mtx{P}/\mtx{D} )^{-1}\mtx{B}\mtx{D}^{-1} & \mtx{0}\\
        -\mtx{D}^{-1}\mtx{B}^\top (\mtx{P}/\mtx{D} )^{-1} & \mtx{D}^{-1} + \mtx{D}^{-1}\mtx{B}^\top (\mtx{P}/\mtx{D} )^{-1} \mtx{B}\mtx{D}^{-1} & \mtx{0}\\
        \mtx{0} & \mtx{0} & \frac{1}{\frac{\sigma_\xi^2}{d-2}+\lambda} \mtx{I}_{d-2-m}
    \end{bmatrix},
\end{align}
where $\mtx{P}\coloneqq 
\begin{bmatrix}
    \mtx{A} & \mtx{B} \\
    \mtx{B}^\top & \mtx{D}
\end{bmatrix}
$ and $\mtx{P}/\mtx{D} \coloneqq \mtx{A}-\mtx{B}\mtx{D}^{-1} \mtx{B}^\top$ is the Schur complement of $\mtx{D}$ in $\mtx{P}$. 

We first derive the expressions of $\mtx{F}'\mtx{N}^\top$.
\begin{align}
    \nonumber
    \mtx{F}'\mtx{N}^\top = & \frac{\sigma_\xi}{m} \mtx{F}' \mtx{Y}' [1 ~~ 2p-1]  \\
    \label{eq: FN}
    = & \sigma_\xi 
    \begin{bmatrix}
        1 & 2p-1 \\
        0 & 0 
    \end{bmatrix} ~~~~\text{by Equation \ref{eq: FY}}
\end{align}

Now we derive $\mtx{B}\mtx{B}^\top$.
\begin{align}
    \nonumber
    \mtx{B}\mtx{B}^\top = & s^2(1-s)^2\mtx{N}\mtx{N}^\top + s^4 \frac{\sigma_\xi^2}{m^2}\mtx{F}'\mtx{F}'^\top + s^3(1-s)\frac{\sigma_\xi}{m}(\mtx{N}\mtx{F}'^\top + \mtx{F}'\mtx{N}^\top) \\
    = & s^2(1-s)^2 r \mtx{J} + s^4 r \mtx{G}_{core} + s^3(1-s) r \mtx{H} ~~~~\text{by Equations \ref{eq: def_N}, \ref{eq: FF} and \ref{eq: FN} },
\end{align}
where $\mtx{J} \coloneqq 
\begin{bmatrix}
    1 & 2p-1 \\
    2p-1 & (2p-1)^2
\end{bmatrix}
$. 

Let $q \coloneqq \frac{\lambda}{s^2 r+\lambda}$. Then
\begin{align}
\nonumber
    \mtx{P}/\mtx{D} = & (1-s)^2 \mtx{G}_{spu} + s(1-s)\frac{\lambda}{s^2 r +\lambda} \mtx{H} + s^2 \frac{\lambda}{s^2 r +\lambda} \mtx{G}_{core} - (1-s)^2 \frac{s^2 r}{s^2 r+\lambda} \mtx{J} + \lambda \mtx{I}_2 \\
    \nonumber
    = & (1-s)^2 \mtx{G}_{spu} + s(1-s)q\mtx{H} + s^2 q \mtx{G}_{core} - (1-s)^2 (1-q) \mtx{J} + \lambda \mtx{I}_2 \\
    \nonumber
    = & 
    \begin{bmatrix}
        \psi_1 & \psi_2 \\
        \psi_2 & \psi_3
    \end{bmatrix},
\end{align}
where 
\begin{align}
    \nonumber
    \psi_1 \coloneqq & (1-s)^2(1+\sigma_1^2) + 2s(1-s)q +s^2 q(1+\sigma_1^2) -(1-s)^2 (1-q)+\lambda \\
    \nonumber
    \psi_2 \coloneqq &  (1-s)^2(2p-1)+s(1-s)q(2p-1) - (1-s)^2 (1-q)(2p-1)\\
    \nonumber
    \psi_3 \coloneqq & (1-s)^2 + s^2q - (1-s)^2 (1-q)(2p-1)^2+\lambda.
\end{align}
Then
\begin{align}
    \nonumber
    (\mtx{P}/\mtx{D})^{-1} = 
    \frac{1}{\psi_1\psi_3 - \psi_2^2}
    \begin{bmatrix}
        \psi_3 & -\psi_2 \\
        -\psi_2 & \psi_1
    \end{bmatrix}.
\end{align}

Let $\vct{w}$ be the minimizer of the objective in \ref{eq: train_on_mixed}, which has the closed form expression $\mtx{M}^{-1} \frac{1}{n}\tilde{\mtx{Z}}\mtx{Y}$. We are ready to derive the elements in $\vct{w}$ using Corollary \ref{cor: XY} and \ref{eq: inv_M}. But before that, we first derive the following
\begin{align}
    \nonumber
    \mtx{B}\mtx{Y}' = & s(1-s)\mtx{N}\mtx{Y}' + s^2 \frac{\sigma_\xi}{m} \mtx{F}'\mtx{Y}'  \\
    \nonumber
    = & s(1-s) \sigma_\xi 
    \begin{bmatrix}
        1 \\
        2p-1 
    \end{bmatrix} + s^2\sigma_\xi
    \begin{bmatrix}
        1\\
        0
    \end{bmatrix} \\
    \nonumber
    = & s\sigma_\xi
    \begin{bmatrix}
        1 \\
        (1-s)(2p-1)
    \end{bmatrix}.
\end{align}
Now we can drive the first two elements in $\vct{w}$ with the above and Corollary \ref{cor: XY} and \ref{eq: inv_M}
\begin{align}
\label{eq: w12}
    \vct{w}_{1:2} = & (\mtx{P}/\mtx{D})^{-1} 
    \begin{bmatrix}
        1\\
        (1-s)(2p-1)
    \end{bmatrix}   -(\mtx{P}/\mtx{D} )^{-1}\mtx{B}\mtx{D}^{-1} s \frac{\sigma_\xi}{m}\mtx{Y}' \\
    \nonumber
    = & q (\mtx{P}/\mtx{D})^{-1} \begin{bmatrix}
        1\\
        (1-s)(2p-1)
    \end{bmatrix}  \\
    \nonumber
    = & \frac{q}{\psi_1\psi_3-\psi_2^2}
    \begin{bmatrix}
        \psi_3-\psi_2(1-s)(2p-1)\\
        -\psi_2 + \psi_1(1-s)(2p-1)
    \end{bmatrix},
\end{align}
as well as the next $m$ entries
\begin{align}
    \nonumber
    \vct{w}_{3:m+2} = & -\mtx{D}^{-1}\mtx{B}^\top (\mtx{P}/\mtx{D} )^{-1} 
    \begin{bmatrix}
        1\\
        (1-s)(2p-1)
    \end{bmatrix}
    + \mtx{D}^{-1} s\frac{\sigma_\xi}{m}\mtx{Y}'+ \mtx{D}^{-1}\mtx{B}^\top (\mtx{P}/\mtx{D} )^{-1} \mtx{B}\mtx{D}^{-1} s\frac{\sigma_\xi}{m}\mtx{Y}' \\
    \nonumber
    = & \frac{\sigma_\xi}{m}\begin{bmatrix}
        h_1\\
        h_2\\
        \vdots\\
        h_m
    \end{bmatrix},
\end{align}
where
\begin{align}
\nonumber
c_1 \coloneqq & \psi_3 - \psi_2(1-s)(2p-1) \\
\nonumber
c_2 \coloneqq & -\psi_2+\psi_1(1-s)(2p-1) \\
    \nonumber
    h_i \coloneqq & \big(  \frac{s}{s^2 r+\lambda} - \frac{q}{s^2+\lambda}\frac{1}{\psi_1\psi_3-\psi_2^2}s(1-s)(c_1+(2p-1)c_2)  \big) y_i - s^2(f_{i,1}c_1+f_{i,2}c_2).
\end{align}
Note that the remaining entries in $\vct{w}$ are all zero.


With simple calculation, we can obtain the MSE test loss on the target distribution
\begin{align}
    \nonumber
    \E_{(\vct{z}, y)\in\mathcal{E}^t} (\vct{w}^\top \vct{z}- y)^2 = (w_1-1)^2 + w_2^2 +\sigma_1^2w_1^2 + \frac{\sigma_\xi^2}{d-2}\sum_{i=3}^{m+2} w_i^2.
\end{align}
Let's look at the last term, which can be written as
\begin{align}
    \nonumber
     \frac{\sigma_\xi^2}{d-2}\sum_{i=3}^{m+2} w_i^2 = & \frac{\sigma_\xi^2}{d-2}r \frac{1}{m}\sum_{i=i}^{m}h_i^2.
\end{align}
Since $\mathbbm{E}h$ is a constant and by the law of large number $\frac{1}{m}\sum_{i=i}^{m}h_i^2$ converges almost surely to the expected value, $\frac{1}{m}\sum_{i=i}^{m}h_i^2$ is also a constant. Note that $r$ is a constant, too. Then the RHS converges to $0$ because $\frac{\sigma_\xi^2}{d-2}= \frac{rm}{d-2}\rightarrow 0$ by our assumptions. Combining the above and equations \ref{eq: w12}, we obtain the following expression of
\begin{align}
    \nonumber
     \E_{(\vct{z}, y)\in\mathcal{E}^t} (\vct{w}^\top \vct{z}- y)^2 = &
\bigg( \frac{q}{\psi_1\psi_3-\psi_2^2}(\psi_3-\psi_2(1-s)(2p-1))  -1\bigg)^2  + \bigg( \frac{q}{\psi_1\psi_3-\psi_2^2}(-\psi_2+\psi_1(1-s)(2p-1))   \bigg)^2 \\
\nonumber
+ & \sigma_1^2 \bigg( \frac{q}{\psi_1\psi_3-\psi_2^2}( \psi_3-\psi_2(1-s)(2p-1) ) \bigg)^2.
\end{align}
which completes the proof.

\section{Experimental Details}\label{apdx: exp_details}

\subsection{Datasets}\label{apdx: details_data}

\textbf{Waterbirds.} In the WaterBirds dataset \cite{sagawa2019distributionally}, the task is to classify images of birds as either landbirds or waterbirds, where each bird image has a background of either $land$ or $water$. The dataset introduces a challenge as the label is spuriously correlated with the image background. Among the 4,795 training samples, about 95\% exhibit this spurious correlation, with the numbers of the four combinations of background and label—(land background, landbird), (water background, landbird), (land background, waterbird), and (water background, waterbird)—being 3498, 184, 56, and 1057, respectively. 

\textbf{UrbanCars.} This dataset, constructed by \cite{li2023whac}, features multiple spurious correlations. The task is classifying images as either urban cars or country cars. Each image has one background (BG) and one co-occurring object (CoObj). The BG is selected from either urban or country backgrounds, and the CoObj is selected from either urban or country objects.  For each class, images with common BG and CoObj constitute 90.25\%, images with uncommon BG and common CoObj, or with common BG and uncommon CoObj, constitute 4.75\%, and images with both uncommon BG and CoObj constitute 0.25\%. This dataset presents a challenge due to multiple spurious correlations/shortcuts.

\textbf{bFFHQ.} This dataset, constructed by \cite{kim2021biaswap}, is the gender-biased version of the Flickr-Faces-HQ (FFHQ) dataset \cite{karras2019style}. The task is to classify whether a given facial image is `young' (aged 10–29) or `old' (aged 40–59). The label `young' is highly correlated with the attribute `women', and `old' is highly correlated with the attribute `man'.

\textbf{Camelyon17.} This dataset, included in the WILDS benchmark by \cite{koh2021wilds}, comprises medical images collected from various hospitals, leading to naturally occurring distribution shifts due to the differences in data collection processes. In the training set, the images are patches taken from 30 WSIs (whole-slide images), with 10 WSIs from each of the 3 hospitals in the
training set. In its out-of-distribution test set, the images are patches taken from 10 WSIs from a different hospital, which was chosen because its patches were the most visually distinctive. These WSIs are also distinct from those in the training set.

\textbf{PACS.} This dataset was constructed by \cite{li2017deeper} and is included in the DOMAINBED benchmark by \cite{gulrajani2020search}. The task is image classification across seven classes. It features images from four domains: Art (A), Cartoons (C), Photos (P), and Sketches (S).

\textbf{VLCS.} This dataset was constructed by \cite{fang2013unbiased} and is included in the DOMAINBED benchmark by \cite{gulrajani2020search}. The task is image classification across five classes. It features images from four photographic domains: Caltech101 (C), LabelMe (L), SUN09 (S), and
VOC2007 (V).

\textbf{Terra Incognita.} This dataset was constructed by \cite{beery2018recognition} and is included in the DOMAINBED benchmark by \cite{gulrajani2020search}.  The task is image classification across ten classes. It contains photographs of wild animals taken by camera traps at four different locations: L100, L38, L43, and L46.

\textbf{Office-Home.}  This dataset was constructed by \cite{venkateswara2017deep} and is included in the DOMAINBED benchmark by \cite{gulrajani2020search}. The task is image classification across 65 classes. It features images from four domains: Art (A), Clipart (C), Product (P) and Real (R).

\subsection{Details of data distribution}\label{apdx: details_dist}

WaterBirds, bFFHQ and UrbanCars  exhibit subpopulation shifts. In these datasets, certain subpopulations are significantly underrepresented in the source distribution. Following \cite{chen2023project}, we examine two types of target distribution that differ from the source distribution: (1) Minority (M), which only contains the minority subpopulations, and (2) Balanced (B), which contains an equal number of examples from each subpopulation. 

PACS, VLCS, Terra Incognita, and Office-Home are domain generalization datasets, each consisting of data collected from multiple distinct environments/domains. Following prior work \cite{gulrajani2020search,li2017deeper,fang2013unbiased,beery2018recognition,venkateswara2017deep}, we select one domain as the target distribution, with the remaining domains forming the source distribution. Below are the exact setups, following the common settings used in the aforementioned works. For PACS, we let P, A, and C constitute the source domain, with S as the target domain; for VLCS, we let V, L, and C constitute the source domain, with S as the target domain; for Terra Incognita, we let L100, L38, and L43 constitute the source domain, with L46 as the target domain; for Office-Home, we let A, C, and P constitute the source domain, with R as the target domain.

Camelyon17 is also a domain generalization dataset, where the source and target distributions consist of data collected from different hospitals.

\subsection{Training details}\label{apdx:training_details}

\textbf{Finetuning of ResNet50 on source data.} On WaterBirds, bFFHQ, and UrbanCars, we train the model on the source data using the SGD optimizer with a batch size of 128, a learning rate of 0.001 and weight decay of 0.0001 for 100 epochs. On Camelyon17, we use the same setting except we train for 20 epochs. On PACS, VLCS, Office-Home, and Terra Incognita, we use the same setting but train for 50 epochs.

\textbf{Training details of all methods.} For all methods, following \cite{chen2023project}, we employ the Adam optimizer \cite{kingma2014adam} with a batch size of 64 and train for 100 epochs. On Camelyon17, due to the large size of the source data, we randomly subsample 20,000 examples from the source data to mix with the target data when applying our method, \ours. This turns out to be sufficient to achieve very high performance.

\subsection{Details of hyperparameter tuning}\label{apdx: hp}

Table \ref{tab: hp} displays the hyperparameter ranges we use for tuning each method. For every method, we tune the learning rate and weight decay, along with method-specific hyperparameters.  It's important to note that the ranges we selected here are all based on the original paper, covering the ranges or the exact values considered in the original study. For \cite{teney2022evading}, in addition to the regularization strength $\lambda$, there is another hyperparameter, which is the number of diverse models trained on top of the embeddings. We set this number to 96, as \cite{teney2022evading} demonstrates that a higher number of models leads to better performance, and 96 is the maximum value they used.

\subsection{Details of the toy experiments in Figure \ref{fig:synthetic_pro2}}\label{apdx: synthetic}

We generate synthetic data that is a higher-dimensional generalization of that analyzed in Section \ref{sec: theory_domain_gen}, to demonstrate the generality of our results. Formally, each example $\vct{z}\in\mathbbm{R}^d$ is given by:
\begin{align}
    \nonumber
    \vct{z} = y\vct{v} + \vct{\xi},
\end{align}
where $\vct{v}$ is selected from $\{\vct{v}_1, \vct{v}_2\}$ in the source distribution and from $\{\vct{v}_2, \vct{v}_3\}$ in the target distribution. The noise $\vct{\xi}$ is a random Gaussian vector sampled from the space orthogonal to the features, formally $\vct{\xi}\sim\mathcal{N}(0, \frac{\sigma^2}{d-2}(\mtx{I}-\mtx{V}\mtx{V}^\top))$, where $\mtx{V}=[\vct{v}_1~~\vct{v}_2]$ in the source domain and $\mtx{V}=[\vct{v}_2~~\vct{v}_3]$ in the target domain.

In our experiments, we set $\sigma=4$ ,$d=8$ and $n=4000$.

\begin{table}
\small
    \caption{Hyperparameter range for each method. m.s. represents `method-specific'. }
    \label{tab: hp}
    \centering
    \begin{tabular}{|c|c|c|c|c|c|}
    \hline
       & \prot\ & DFR & Mixup & \citet{teney2022evading} & \ours \\
       \hline
       \hline
        lr &  \multicolumn{5}{c|}{$\{0.1, 0.01, 0.001\}$} \\
        \hline
        wd &  \multicolumn{5}{c|}{$\{0.1, 0.01, 0.001\}$}  \\
        \hline
        m.s. & $d\!\in\!\{1,2^2,\!2^4,\!2^6,\!2^8,\!2^{10}\}$ & None & $\alpha \in \{0.2,\! 0.4,\! 2^2,\! 2^3,\! 2^5\}$ & $\lambda\!\in\!\{5e^{-3},\!1e^{-2},\!0.1,\!1,\!5\}$ &  $s\!\in\!\{0.1, 0.3, 0.5, 0.7,0.9\}$\\
        \hline
    \end{tabular}
\end{table}

\section{Additional Experimental Results}\label{apdx: addtional_exp}

\begin{figure}[!t]
    \centering
    \includegraphics[width=.7\textwidth]{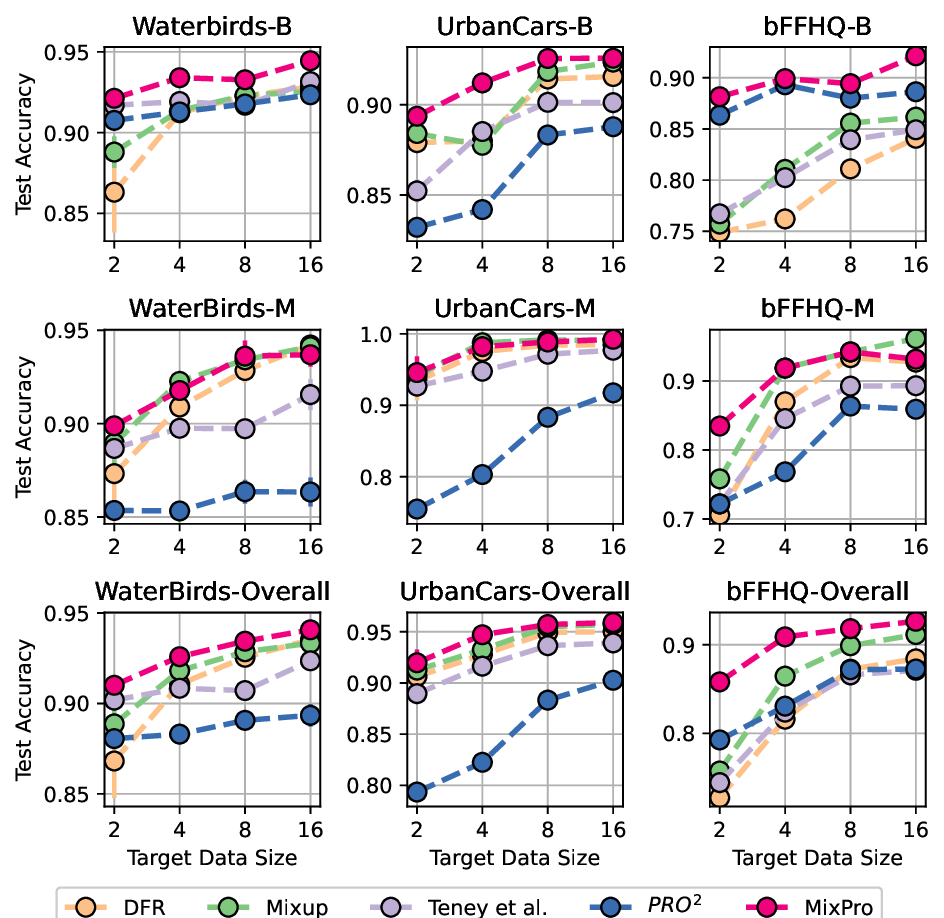}
    \caption{Detailed results on the three subpopulation shift datasets where we use the SWAG-pretrained VIT-L/16 as the backbone. The top section shows the results when the target distribution is balanced. The middle section displays the results when the target distribution contains only subpopulations that are minorities in the source data. The bottom section shows the average results over these two cases. We note that these bottom plots are the same as the corresponding plots in Figure \ref{fig:large_val_swagvit}. }
    \label{fig:subpop_all}
\end{figure}

\subsection{Detailed results on subpopulation shift datasets}\label{apdx: exp_subpop_all}

As mentioned in Section \ref{apdx: details_dist}, for each subpopulation shift dataset, we examined two types of target distributions: Balanced (B) and Minority (M). The complete results are presented in Figure \ref{fig:subpop_all}. We observe that \ours\ consistently performs the best on each dataset, while other methods may perform well in some cases but fail in others. Overall, as indicated by the bottom row of plots, \ours\ demonstrates the best performance.

\subsection{Results for ResNet50}\label{apdx: rn50}

We also present results for cases where we use the ImageNet-pretrained ResNet50, fine-tuned on the source data, as the backbone. Fine-tuning on the source data aligns with the practices in \cite{rosenfeld2022domain,kirichenko2022last} and is essential for achieving good performance; otherwise, the performance would be significantly lower, unlike with the SWAG-pretrained ViT. 

Figure \ref{fig:large_val_resnet} presents the results obtained by using an additional validation set for fine-tuning, while Figure \ref{fig:cross_val_resnet} shows the results from using cross-validation with the few given target data. We observe that, similar to the conclusions drawn in Figures \ref{fig:large_val_swagvit} and \ref{fig:cross_val_swagvit}, some methods may perform comparably to \ours\ on certain datasets but are surpassed on others.

\begin{figure}[!t]
    \centering
    \includegraphics[width=.98\textwidth]{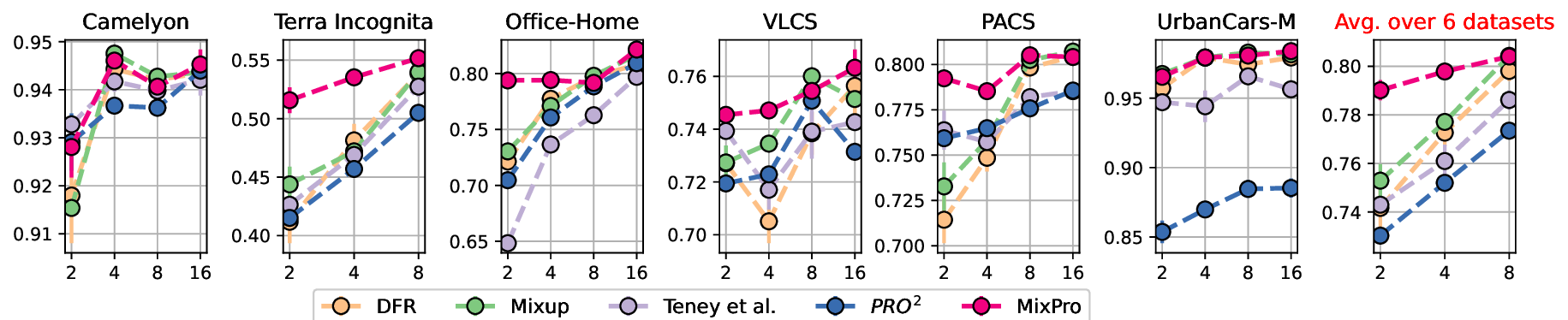}
    \caption{ Results are presented for cases where we use ResNet50, pretrained on ImageNet and fine-tuned on the source data, as the backbone model. Here we use the additional validation set for hyperparameter tuning.  We evaluate all methods across 6 datasets. We see that  \ours\ is the only one that consistently delivers the best performance. }
    \label{fig:large_val_resnet}
\end{figure}

\begin{figure}[!t]
    \centering
    \includegraphics[width=.99\textwidth]{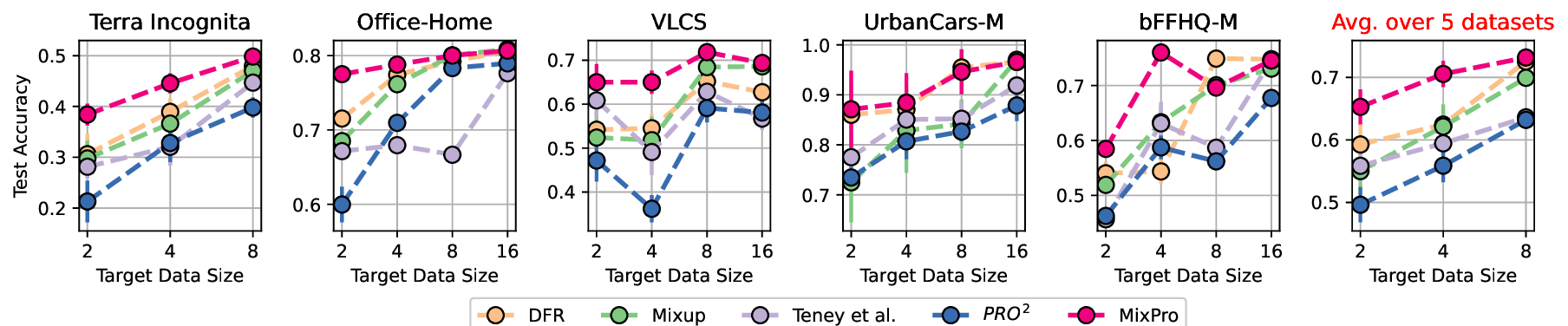}
    \caption{ Results are presented for cases where we use ResNet50, pretrained on ImageNet and fine-tuned on the source data, as the backbone model. Here we use cross-validation with the few given target data for hyperparameter tuning.}
    \label{fig:cross_val_resnet}
\end{figure}

\subsection{Mixing weight and target dataset size}\label{apdx: ablation}

Here we domonstrate how the size of the target data influences the choice of \( s \).  As depicted in Fig \ref{subfig: vary_r}, our theoretical analysis in Section \ref{sec: theory_subpop} has already indicated that with fewer target data points, a smaller \( s \) is preferable.  This means placing more emphasis on the source data to counteract the noise.  We confirm this intuition on real datasets. In Figure \ref{fig: s_size}, we show the test accuracy under different values of $s$ and target data sizes, for the results with ResNet50 as the backbone on three datasets.

\begin{figure}[!t]
    \centering
\subfigure[Terra Incognita]{    \includegraphics[width=.22\textwidth]{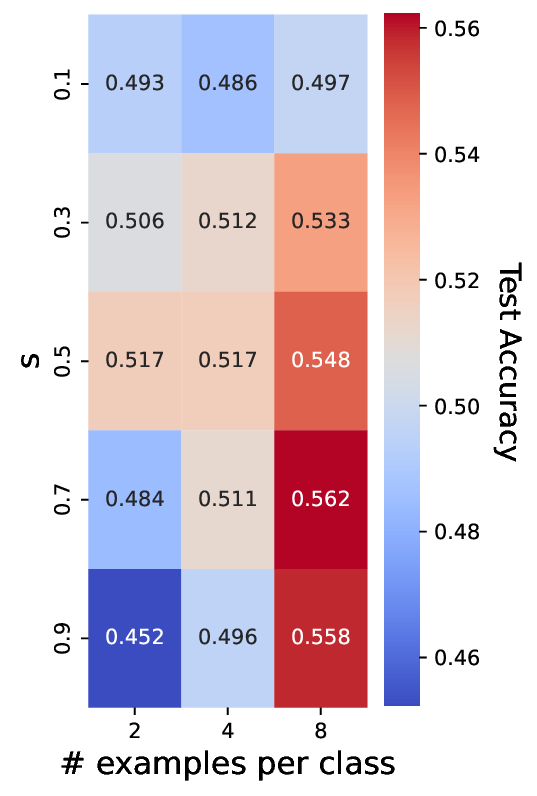}
}  
\subfigure[VLCS]{ \includegraphics[width=.22\textwidth]{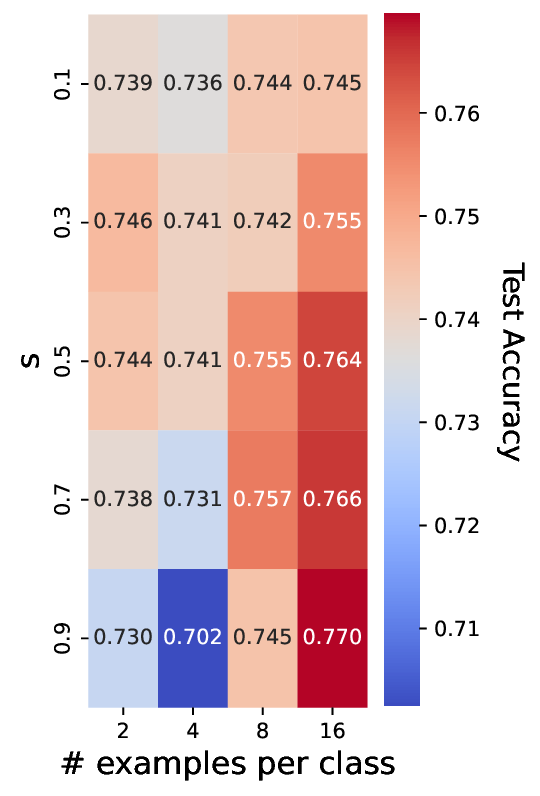}
}
\subfigure[PACS]{ \includegraphics[width=.22\textwidth]{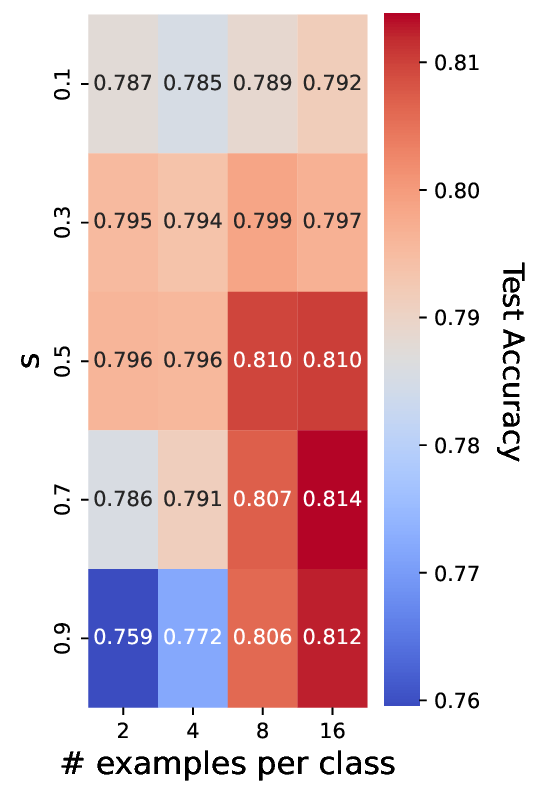}
}
    \caption{Target data Size and the selection of the mixing weight $s$. Recall that $s$ represents the weight assigned to the target data. Consistent with our intuition and the analysis presented in Sections \ref{sec: theory_subpop} and \ref{fig: subpopshift_curve}, we observe that with a smaller target data size, the optimal value of $s$ decreases. }
    \label{fig: s_size}
\end{figure}

\subsection{Comparisons with \cite{zhu2022progressive} and \cite{zhang2022few}}\label{apdx: additional_comp}

\cite{zhu2022progressive} employs both image-level and representation-level mixup, with the mixing ratio progressively adjusted, and optimization performed using the MAML meta learning framework. Below, we present a comparison between their method and ours.
\begin{itemize}
    \item \textbf{\ours\ is data-efficient and performs better.} Our method is  more data-efficient than \cite{zhu2022progressive}, achieving better performance when given very few target data. We conducted new experiments on PACS (with the target domain being S) and Office Home (with the target domain being R) using ResNet18, employing the same setup as in \cite{zhu2022progressive} (which uses only a subset of the classes), and reported our method’s mean Average Precision (mAP) — the metric used in their paper — comparing it with theirs in Table \ref{tab: zhu2022}. We observed that our method indeed outperforms theirs on both datasets. This could be attributed to their approach of updating the encoder rather than freezing it, which can potentially be harmful as the entire network is more prone to overfitting the very few target examples, compared to just the linear layer.
    \begin{table}[]
        \centering
        \begin{tabular}{|c|c|c|}
        \hline
            dataset & P-Mixup & \ours \\
            \hline
           PACS  & 81.97 & {\bf 84.50} \\
           Office-Home & 85.05 & {\bf 90.84}\\
           \hline
        \end{tabular}
        \caption{Comparison between \ours\ and \cite{zhu2022progressive} on Office-Home and PACS.}
        \label{tab: zhu2022}
    \end{table}
    \item \textbf{\ours\ is lightweight.} A few factors contribute to their method being significantly more costly than ours: (1) They update the encoder and employ image-level mixup, while the cost of our method is equivalent to just linear probing. (2) Their use of MAML, a meta learning framework, involves computing the gradient of a gradient, which is very expensive even with approximations. Despite being considerably more cost-effective, our method still achieves superior performance.
\end{itemize}

LCCS \cite{zhang2022few} adapts the normalization layers leveraging both source data and the information from the few target data.  Below, we present a comparison between their method and ours.
\begin{itemize}
    \item \textbf{\ours\ is data-efficient and performs better.} Our method is more data-efficient than \cite{zhang2022few} and achieves better performance given few target data. We conducted new experiments on the PACS dataset using ResNet18, following the same setup as in \cite{zhang2022few}. The performances are compared in Table \ref{tab: zhang2022few}. \ours\ outperforms theirs with the margin being especially significant when the amount of target data is small. The potential reason is that batch normalization, known to speed up training, also increases the risk of overfitting given few target data, thereby potentially degrading representation quality. In contrast, our method only trains the last linear layer, posing smaller risk of overfitting.
    \begin{table}[]
        \centering
        \begin{tabular}{|c|c|c|c|c|c|c|}
        \hline
           &  LCCS (1-shot)	 & MixPro (1-shot)	& LCCS (5-shot)	& MixPro (5-shot) &	LCCS (10-shot)	& MixPro (10-shot) \\
           \hline
           Sketch  &  72.5 &	{\bf 75.4} &	76.7	& {\bf 77.1} & 	79.4 &	{\bf 79.5}\\
           \hline
        \end{tabular}
        \caption{Comparison between \ours\ and \cite{zhang2022few} on PACS.}
        \label{tab: zhang2022few}
    \end{table}
    \item \textbf{\ours\ is lightweight.} Our method is also more efficient compared to \cite{zhang2022few}. Their method involves performing SVD on the matrix of channel-wise features, which is extremely expensive for very large models, whereas our method solely trains a linear layer on representations.
\end{itemize}

\end{document}